\documentclass{article}
\usepackage[T1]{fontenc}
\usepackage{tgpagella}

\input{our_macros.sty}

\title{Sharp threshold for alignment of graph databases with Gaussian weights} 

\author{Luca Ganassali\footnote{INRIA, ENS, PSL Research University, Paris, France. Email: \texttt{luca.ganassali@inria.fr}.}}

\date{May 18, 2021}

\begin{document}
	
	\maketitle
	
	\begin{abstract}%
		We study the fundamental limits for reconstruction in weighted graph (or matrix) database alignment. We consider a model of two graphs where $\pi^*$ is a planted uniform permutation and all pairs of edge weights $(A_{i,j}, B_{\pi^*(i),\pi^*(j)})_{1 \leq i<j \leq n}$ are i.i.d. pairs of Gaussian variables with zero mean, unit variance and correlation parameter $\rho \in [0,1]$. We prove that there is a sharp threshold for exact recovery of $\pi^*$: if $n \rho^2 \geq (4+\eps) \log n + \omega(1)$ for some $\eps>0$, there is an estimator  $\hat{\pi}$ -- namely the MAP estimator -- based on the observation of databases $A,B$ that achieves exact reconstruction with high probability. Conversely, if $n \rho^2 \leq 4 \log n - \log \log n - \omega(1)$, then any estimator $\hat{\pi}$ verifies $\hat{\pi}=\pi$ with probability $o(1)$. 
		
		This result shows that the information-theoretic threshold for exact recovery is the same as the one obtained for detection in a recent work by \cite{Wu20}: in other words, for Gaussian weighted graph alignment, the problem of reconstruction is not more difficult than that of detection. Though the reconstruction task was already well understood for vector-shaped database alignment (that is taking signal of the form $(u_i, v_{\pi^*(i)})_{1 \leq i\leq n}$ where $(u_i, v_{\pi^*(i)})$ are i.i.d. pairs in $\dR^{d_u} \times \dR^{d_v}$), its formulation for graph (or matrix) databases brings a drastically different problem for which the hard phase is conjectured to be wide.
		
		The proofs build upon the analysis of the MAP estimator and the second moment method, together with the study of the correlation structure of energies of permutations.
	\end{abstract}
	
\maketitle


\section*{Introduction}
\paragraph*{Aligning databases}
We address the following problem: suppose that we have two databases consisting in weighted graphs represented by their adjacency matrices $A$ and $B$. For simplicity, assume that the two graphs have same size and that each individual appears in both graphs. For a given individual, its attached signal consists in weighted edges with all other users. Across databases, edges that correspond to pairs of matched individuals are correlated. We consider the following question: \emph{if the graphs are shown unlabeled (that is, if users are anonymized), is it possible to recover the corresponding matching between databases by aligning them at the sight of their correlation structure?}

Intuitively, when the matrices are correlated enough, one can learn the true matching between individuals present in the databases. In this study we investigate the precise conditions on correlation under which exact reconstruction (or perfect de-anonymization) is feasible with high probability.

\emph{De-anonymization problems} aroused great interest when \cite{Narayanan08} were able to de-anonymize an unlabeled dataset of film ratings (namely, the Netflix prize dataset) with the observation of a publicly available database (namely the Internet Movie Database), using correlations between the ratings. Since then, they have been studied in recent literature, in several versions and reformulations. The range of applications has been widened to quantifying privacy issues related to databases (\cite{Dwork08}) or social networks (\cite{Narayanan09}). 

Widespread attention was given on the \emph{graph alignment problem}, focusing on more geometrical databases (\cite{Cullina2017, Cullina18, Ding18, Fan2019Wigner, fan2019ERC, Ganassali20a}). Lots of other natural applications can be mentioned, such as \emph{pattern recognition} in image processing (\cite{Berg05, Cour07}), \emph{aligning protein interaction networks} in computational biology (\cite{Singh08}) or performing \emph{ontology alignment} in natural language processing (\cite{Haghighi05}).
	
\paragraph*{Vector-shaped and graph-shaped databases} From the theoretical point of view, fundamental limits for the deanonymisation problem are now well understood when data only consists in vectors $u,v$ of given sizes $n$ (\cite{Cullina18data, Dai19}), that is when each user has its own signal, regardless of its connections with others. In this setting, the problem can be phrased in terms of a \emph{Linear Assigment Problem (LAP)}:
\begin{equation}\label{eq:LAP}
\argmax_{\Pi} \langle \Pi u,v \rangle,
\end{equation} where the maximum runs over all permutation matrices of size $n$. Even if greedy optimization is easily seen to be exponential-time, LAP can be solved efficiently in $O(n^3)$ steps using the classical Hungarian algorithm (\cite{Kuhn55}).

Another related problem is that of linear regression with an unknown permutation, studied in \cite{Pananjady16}: this time, one observes $y = \Pi^* A x^* + w$, where $x^* \in \dR^d$ is  an unknown vector, $\Pi^*$ is an unknown $n \times n$ permutation matrix, and $w \in \dR^n$ is additive Gaussian noise. Here again, the permutation $\Pi^*$ applies only on the left side of $A$, which corresponds to row permutation.

On the other hand, when the databases are graphs, the problem is different and can be phrased this time in terms of a \emph{Quadratic Assigment Problem (QAP)}:
\begin{equation}\label{eq:QAP}
\argmax_{\Pi} \langle A ,\Pi B \Pi^T \rangle.
\end{equation} A significant difference with the previous vector-shaped setting is that this problem is known to be NP-hard in the worst case, as well as some of its approximations (\cite{Makarychev14,Pardalos94}). In the case where the signal lies in the graph structure itself -- that is, when $(A_{i,j}, B_{\pi^*(i),\pi^*(j)})_{1 \leq i<j \leq n}$ are correlated pairs of Bernoulli variables -- recent work (\cite{Cullina2017, Cullina18}) showed that there exists a sharp threshold for exact recovery, where the signal-to-noise ratio can be expressed in the correlated \ER model in terms of the size $n$ of both graphs, the marginal edge probability $p$ and the correlation parameter $s$ between edges of the two graphs. Indeed, they established that exact (resp. almost exact) reconstruction is feasible with high probability if and only if $nps \geq \log n + \omega(1)$ (resp. $nps \geq \omega(1)$). When the signal is sparser, e.g. $np = \Theta(1)$, the problem of \emph{partial graph alignment} (that is, recovering only a positive fraction of vertices) has been recently explored algorithmically (\cite{Ganassali20a}) and theoretically (\cite{Hall20}).

\paragraph*{Model of Gaussian Wigner matrices} This paper focuses on the case where signal lies in weights on edges between all pairs of nodes. In order to rigorously analyze the fundamental limits of our reconstruction problem, we will work in a probabilistic setting. The \emph{correlated Gaussian Wigner model} is first introduced by \cite{Ding18} as a standard model for random graph alignment, and has been further investigated for its own sake in recent work (\cite{Fan2019Wigner, GLM19, Wu20}).

Assume that the weighted adjacency matrices $A$ and $B$ of the two graphs $G$ and $G'$ are symmetric, and sampled as follows: first draw the planted permutation $\pi^*$ uniformly at random in $\cS_n$. Then all pairs of edge weights $(A_{i,j}, B_{\pi^*(i),\pi^*(j)})_{1 \leq i<j \leq n}$ are i.i.d. couples of normal variables with zero mean, unit variance and correlation parameter $\rho \in [0,1]$.
Since all Gaussian variables are independent from $\pi^*$, matrix $B$ can also be drawn from $A$ as follows: 
\begin{equation}\label{eq:model}
B = \rho \cdot \Pi^{*T} A\Pi^* + \sqrt{1-\rho^2}  \cdot H,
\end{equation} where $H$ is an independent copy of $A$, and $\Pi^*$ is the $n \times n$ matrix representation of permutation $\pi^*$, that is $\Pi^*_{i,j} = \ind{j = \pi^*(i)}$.

\begin{figure}[H]
	\centering
	\includegraphics[scale=1.3]{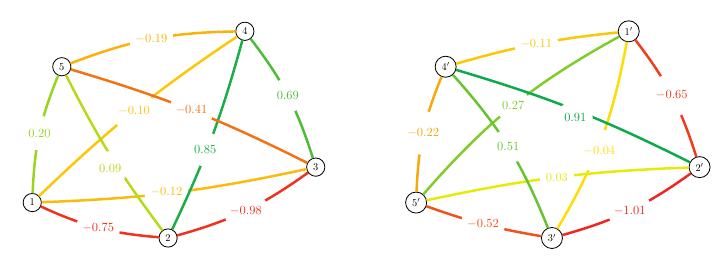}
	\caption{A sample from model \eqref{eq:model} with $n=5$. For representation, edges are colored according to their weights, and the underlying alignment is $u \mapsto u'$ for $u \in \left\lbrace 1,2,3,4,5 \right\rbrace $.}
	\label{fig:model} 
\end{figure}

\paragraph*{Detection problem} A most recent paper (\cite{Wu20}) studies fundamental limits for detection, both in correlated Gaussian weighted and correlated \ER graphs. This time, the problem is as follows: \emph{given $A,B$, are we able to distinguish between model \eqref{eq:model} and a null model, where the two graphs are just independent Gaussian weighted graphs?} Intuitively, this problem is less demanding than that of exact alignment, since the task is to detect -- wherever in the graph -- the presence of a hidden planted alignment. Under the same model \eqref{eq:model}, Y. Wu, J. Xu and S. Yu showed that detection is feasible with high probability if $n \rho^2 \geq 4 \log n$, whereas it is impossible if $n \rho^2 \leq (4-\eps) \log n$ for some $\eps>0$. Their study builds on an analysis of the likelihood ratio -- as often in detection problems. The contribution of this paper is to show that this sharp detection threshold is also that of exact reconstruction. Interestingly, for Gaussian weighted graph alignment, the problem of reconstruction is in fact not more difficult than that of detection.

After this paper was completed, the author was made aware of recent and independent work conducted by \cite{wu2021settling}, which also obtains -- among other things -- the results of this paper, albeit with different proof techniques.

\subsection*{Main results}
In the sequel, we work with the correlated Gaussian Wigner model described in \eqref{eq:model}, and establish the precise (sharp) threshold for exact recovery of $\pi^*$ in this model.
\begin{theorem}[Achievability part]
	\label{theorem_exact_r-ach}
	If for $n$ large enough
	\begin{equation}\label{eq:cond_ach_glo}
	\rho^2 \geq \frac{(4+\eps) \log n}{n}
	\end{equation}for some $\eps>0$, then there is an estimator (namely, the MAP estimator) $\hat{\pi}$ of $\pi$ given $A,B$ such that $\hat{\pi}=\pi^*$ with probability $1-o(1)$.
\end{theorem}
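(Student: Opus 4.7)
The plan is to analyze the MAP estimator via a first-moment bound. Since $\pi^*$ is uniform and the pairs $(A_{ij},B_{\pi^*(i)\pi^*(j)})$ are i.i.d.\ bivariate Gaussians with unit variances and correlation $\rho$, the log-likelihood $\log p(A,B\mid \pi)$ reduces---after dropping the permutation-invariant quantities $\sum A^2$ and $\sum B^2$---to a positive constant times $E(\pi):=\sum_{i<j}A_{ij}B_{\pi(i)\pi(j)}$, so that $\hat\pi_\MAP=\argmax_\pi E(\pi)$. By the symmetry of the prior I may take $\pi^*=\id$, in which case $B=\rho A+\sqrt{1-\rho^2}\,H$ with $H$ an independent copy of $A$. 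For $\pi\neq \id$, let $\sigma$ denote the induced action on unordered pairs (so $\sigma(\{i,j\})=\{\pi(i),\pi(j)\}$) and set
$$V_A(\pi) := \sum_{e} (A_e - A_{\sigma(e)})^2,\qquad d(\pi) := |\{e:\sigma(e)\neq e\}|.$$
A short cycle-by-cycle computation yields
$$E(\pi^*)-E(\pi) = \tfrac{\rho}{2}\,V_A(\pi) + \sqrt{1-\rho^2}\,W_\pi,$$
where $W_\pi\mid A$ is centered Gaussian with variance $V_A(\pi)$, so that
$$\dP\bigl(E(\pi)\geq E(\pi^*)\bigm| A\bigr) \leq \exp\!\Bigl(-\tfrac{\rho^2 V_A(\pi)}{8(1-\rho^2)}\Bigr).$$
Since $V_A(\pi)$ is a Laplacian quadratic form in i.i.d.\ standard Gaussians with $\dE[V_A(\pi)]=2d(\pi)$, standard chi-square concentration guarantees $V_A(\pi) \geq (2-o(1))d(\pi)$ simultaneously in $\pi$ on a high-probability event.

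It remains to show that $\sum_{\pi\neq\id}\dP(E(\pi)\geq E(\pi^*)) = o(1)$. Parametrizing permutations by $k=|\{i:\pi(i)\neq i\}|$, there are at most $n^k$ such permutations, and any such $\pi$ satisfies the crude lower bound $d(\pi)\geq k(n-k)+\binom{k}{2}-k/2\geq k(n-1)/2$. For close permutations ($k=o(n)$), the per-permutation tail together with $\rho^2 n\geq (4+\eps)\log n$ yields
$$\dP\bigl(E(\pi)\geq E(\pi^*)\bigr) \leq n^{-(1+\eps/4)(1-k/n)k\,(1-o(1))}.$$
The $k=2$ (transposition) term is the worst: $\binom{n}{2}\cdot n^{-(2+\eps/2)+o(1)}=n^{-\eps/2+o(1)}$, precisely matching the announced threshold; each individual $k\in\{3,\ldots,o(n)\}$ contributes at most $n^{-\Theta(\eps k)}$ and the sum is $o(1)$.

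The main obstacle is the regime $k=\Theta(n)$, most notably $n$-cycles or products of $\Theta(n)$ disjoint transpositions: a naive union bound is insufficient there, since up to $e^{n\log n}$ permutations are involved while the per-permutation tail only decays as $\exp(-\Theta(n\log n))$. The plan for this regime is to bypass the first moment and exploit the covariance structure of the Gaussian process $\{E(\pi)\}_\pi$ conditionally on $A$ (they share a common variance $(1-\rho^2)\sum_e A_e^2$ but have strongly correlated fluctuations for permutations that move similar sets of pairs), controlling $\max_{\pi: k(\pi)=\Theta(n)} E(\pi)$ either by a Gaussian comparison/chaining argument or by a second moment refinement on the count of bad permutations, as indicated in the abstract. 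The point is that at the threshold $\rho^2 n=(4+\eps)\log n$ one has $\dE[E(\pi^*)]-\dE[\max_{\pi\,\text{far}} E(\pi)]\asymp \eps\, n^{3/2}\sqrt{\log n}$, which comfortably exceeds the $O(n)$ fluctuation scale. This ``study of the correlation structure of energies of permutations'' is the technical crux.
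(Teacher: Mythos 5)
Your setup and small-$k$ analysis match the paper's argument: you correctly identify $\hat\pi_\MAP$ as the QAP maximizer, pass to $\pi^*=\id$, derive the conditional-on-$A$ Gaussian form of the relative energy, get the per-permutation tail bound $\exp\bigl(-\rho^2 V_A(\pi)/(8(1-\rho^2))\bigr)$ via a uniform concentration of $V_A(\pi)$ around $2d(\pi)$, and show that the union bound over $\pi$ with $k=o(n)$ unfixed points closes exactly at $\rho^2 n > (4+\eps)\log n$, with the $k=2$ (transposition) term dominating. You also correctly diagnose that the naive union bound fails for $k=\Theta(n)$; this is precisely the same obstruction the paper records (the derangement contribution forces $\rho^2 n \gtrsim 8\log n$ under a bare union bound).

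However, your treatment of the $k=\Theta(n)$ regime is not a proof but a hope. The assertion that
$\dE[E(\pi^*)]-\dE[\max_{\pi\text{ far}} E(\pi)]\asymp \eps\, n^{3/2}\sqrt{\log n}$
``comfortably exceeds the $O(n)$ fluctuation scale'' is unjustified, and the ``$O(n)$ fluctuation scale'' is wrong: conditionally on $A$, the variables $X_\sigma$ at overlap $\alpha n$ have variance $v_\sigma\approx\alpha(2-\alpha)n^2$, and after removing a common equicorrelated component their residual variance is still of order $n^2$, so the maximum over the $\exp(\Theta(n\log n))$ candidates lives at scale $n^{3/2}\log^{1/2}n$ --- the \emph{same} scale as the deterministic drift. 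The entire question at the sharp threshold is whether the numerical coefficient of the max is small enough, and this requires a quantitative \emph{lower bound on the covariance} $c_{\sigma,\sigma'}$ for $\sigma,\sigma'\in\cS_{n,\alpha n}$. Concretely, what is missing is (a) a combinatorial lower bound on $\sharp(\cD^{\mathrm E}_\sigma\cap\cD^{\mathrm E}_{\sigma'})$ in terms of the number of common unfixed points, yielding $c_{\sigma,\sigma'}\geq f(\alpha)n^2-O(n^{3/2}\log^{1/2}n)$ with $f$ as in the paper's Lemma~\ref{lemma:min_corr_delta}; (b) a Slepian-type comparison to a totally equicorrelated Gaussian vector, giving $\max_\sigma X_\sigma \lesssim \sqrt{2\alpha(\alpha(2-\alpha)-f(\alpha))}\,n^{3/2}\log^{1/2}n$; and (c) the elementary but non-obvious check that $\alpha(2-\alpha)\geq\sqrt{2\alpha(\alpha(2-\alpha)-f(\alpha))}$ on $[0,1]$ (the paper's Lemma~\ref{lemma:final_function_study}), which is exactly what makes the numerics close at the factor-$2$ threshold. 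Neither generic chaining nor a second-moment refinement of the bad-permutation count (which in the paper is reserved for the \emph{converse}, not achievability) would deliver this sharp constant without the covariance lower bound you omit.

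A minor slip: your lower bound $d(\pi)\geq k(n-k)+\binom{k}{2}-k/2$ is off by lower-order terms; the correct inequality from the paper is $d^{\mathrm E}_\sigma\geq k(n-k/2)$, but this does not affect the asymptotics you use.
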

\begin{theorem}[Converse part]
	\label{theorem_exact_r-conv}
	Conversely, if
	\begin{equation}\label{eq:cond_imp_glo}
	\rho^2 \leq \frac{4 \log n - \log \log n - \omega(1)}{n}
	\end{equation}then any estimator $\hat{\pi}$ of $\pi$ given $A,B$ verifies $\hat{\pi}=\pi^*$ with probability $o(1)$.
\end{theorem}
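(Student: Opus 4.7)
Under the uniform prior on $\pi^*$, the MAP estimator $\hat\pi_{\MAP} = \argmax_\pi F(\pi)$ with $F(\pi) := \langle A, \Pi B \Pi^T \rangle$ minimizes the probability of error, so any estimator is correct with probability at most $\dP(F(\pi^*) > F(\pi) \text{ for all } \pi \neq \pi^*)$. By symmetry of the model I will condition on $\pi^* = \id$, and it suffices to exhibit, with probability $1-o(1)$, some $\pi \neq \id$ with $F(\pi) \geq F(\id)$. The plan is to restrict to transpositions---the cheapest competitors of $\id$---and apply a second-moment (Paley--Zygmund) argument to
\[ N := \#\{\{i,j\} : i < j,\ F(\tau_{ij}) \geq F(\id)\}. \]

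\paragraph*{First moment.}
Writing $B_{pq} = \rho A_{pq} + \sqrt{1-\rho^2}\,H_{pq}$ and cancelling pairs unaffected by the swap yields, up to a positive scalar,
\[ F(\id) - F(\tau_{ij}) = \sum_{k \neq i,j} (A_{ik} - A_{jk})(B_{ik} - B_{jk}), \]
a sum of $n-2$ i.i.d.\ products $X_k Y_k$ of bivariate Gaussians with $\dE[XY] = 2\rho$ and $\Var(XY) = 4(1+\rho^2)$. In the regime \eqref{eq:cond_imp_glo} the required deviation is $\Theta(\sqrt{\log n})$ standard deviations---in the moderate-deviation window---so a sharp Gaussian (or Bahadur--Rao) tail estimate will give
\[ p_n := \dP(F(\tau_{ij}) \geq F(\id)) \sim \frac{1}{2\sqrt{2\pi \log n}}\exp\!\left(-\frac{n\rho^2}{2(1+\rho^2)}\right). \]
The $-\log\log n$ correction in \eqref{eq:cond_imp_glo} is tuned precisely so that this $1/\sqrt{\log n}$ prefactor is compensated and $\dE[N] = \binom{n}{2}\, p_n \to \infty$.

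\paragraph*{Second moment and conclusion.}
For Paley--Zygmund I need $\dE[N^2] \leq (1+o(1))\dE[N]^2$. Splitting $\dE[N^2]$ according to $|\{i,j\} \cap \{k,l\}|$, the dominant contribution (about $\binom{n}{2}\binom{n-2}{2}\sim n^4/4$ pairs) comes from disjoint supports. For disjoint pairs, a direct covariance computation accounting for the few entries of $A$ and $H$ shared between the two defining sums gives $\cov(F(\tau_{ij}), F(\tau_{kl})) = O(1)$ against marginal variances $\Theta(n)$; the correlation between $F(\id)-F(\tau_{ij})$ and $F(\id)-F(\tau_{kl})$ is thus $r = O(1/n)$. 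A bivariate moderate-deviation estimate then yields
\[ \dP\big(F(\tau_{ij}) \geq F(\id),\, F(\tau_{kl}) \geq F(\id)\big) = (1+o(1))\, p_n^2, \]
since a Mehler-type multiplicative correction is of order $e^{r t^2} = e^{O(\log n / n)} = 1+o(1)$ at deviation level $t \sim \sqrt{\log n}$. Overlapping pairs ($O(n^3)$ of them) contribute $O(n^3 p_n) = o(\dE[N]^2)$, so $\dE[N^2] = (1+o(1))\dE[N]^2$ and Paley--Zygmund gives $\dP(N \geq 1) \to 1$.

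\paragraph*{Main obstacle.}
The technical crux is the joint moderate-deviation asymptotic at the second-moment stage. At tail level $t \sim \sqrt{\log n}$, a multiplicative correction $e^{r t^2}$ would be fatal if the correlation $r$ between two disjoint transposition statistics were only $\Theta(1/\sqrt n)$; certifying $r = O(1/n)$ via an exact variance computation---on a random variable that is a quadratic form rather than Gaussian---is therefore the most delicate point. A secondary subtlety is that a crude Gaussian bound $Q(t) \leq e^{-t^2/2}$ (without the $1/t$ prefactor) yields only the weaker threshold $n\rho^2 = 4\log n$: keeping the sharp $1/\sqrt{\log n}$ prefactor in $p_n$ is exactly what allows the $-\log\log n$ correction in \eqref{eq:cond_imp_glo}.
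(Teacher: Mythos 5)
Your high-level architecture is exactly that of the paper: restrict to transpositions, compute the first moment to see that the $-\log\log n$ correction in \eqref{eq:cond_imp_glo} is tuned to make $\dE[N]\to\infty$, then show $\dE[N^2]=(1+o(1))\dE[N]^2$ by splitting pairs $(\tau,\tau')$ according to the overlap of their supports, and conclude via Paley--Zygmund. The covariance computation ($O(1)$ shared entries against $\Theta(n)$ variance, hence $r=O(1/n)$ for disjoint transpositions) is also correct.

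The route for the actual tail estimates, however, is genuinely different and is where your proposal has a gap. You work \emph{unconditionally}, so $F(\id)-F(\tau_{ij})$ is a sum of $n-2$ i.i.d.\ products of bivariate Gaussians, and you invoke a Bahadur--Rao asymptotic at deviation scale $\Theta(\sqrt{\log n})$ together with a ``Mehler-type'' bivariate correction $e^{rt^2}$ at the second-moment stage. Mehler's formula, and more generally sharp multiplicative corrections of the form $e^{rt^2}$, are Gaussian facts; extending a sharp \emph{bivariate} moderate-deviation asymptotic to sums of non-Gaussian (exponential-tailed) i.i.d.\ terms is precisely the technical work that the proposal defers, and it is not a one-line matter. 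You flag this as the ``main obstacle'' but slightly misidentify the crux: certifying $r = O(1/n)$ is a straightforward covariance count, whereas the bivariate non-Gaussian tail asymptotic is the hard part, and it is left entirely unproved.

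The paper avoids this issue by the key device you are missing: condition on $A$. Writing $B = \rho \Pi^{*T} A \Pi^* + \sqrt{1-\rho^2}\,H$ and passing to the relative energy $\delta(\sigma) = \cL(\sigma,A,H)-\cL(\id,A,H)$, one finds that conditionally on $A$ the vector $(\delta(\tau))_{\tau}$ is \emph{exactly jointly Gaussian}, with mean $\rho^2 v_\tau$ and covariance $4\rho^2(1-\rho^2)\,c_{\tau,\tau'}$, where $v_\tau, c_{\tau,\tau'}$ are quadratic forms in $A$ alone. Thus only exact Gaussian univariate and bivariate tail bounds are ever needed; all the delicacy is shifted into controlling the random quantities $v_\tau,c_{\tau,\tau'}$, which is done once and for all via Hanson--Wright (Corollary~\ref{corollary:control_C}). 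The resulting conditional correlation is $O(\sqrt{\log n / n})$ rather than your $O(1/n)$ (because of the fluctuations of $c_{\tau,\tau'}$), but this is still small enough for the $\tau\cap\tau'=\varnothing$ case, while the $\tau\cap\tau'\neq\varnothing$ case has correlation $\sim 1/4$ and is killed by the stronger generic bivariate Gaussian bound (Lemma~\ref{lemma:control_dev_cor_gaussian}(ii)). In short: the conditioning step replaces your ``moderate deviations for sums of products of Gaussians'' by exact Gaussian tails, and without something of this kind your proposal does not close.
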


\paragraph*{Computational limits of exact recovery}
For the correlated Gaussian Wigner model \eqref{eq:model}, several algorithms have been studied, usually as a first step in order to analyze further graph alignment algorithms. The state-of-the-art polynomial-time algorithms are either based on \emph{degree profiles} (\cite{Ding18}), or on a spectral method (\cite{Fan2019Wigner}). In both cases, these methods require the noise parameter $\sqrt{1-\rho^2}$ to be $O\left(\log^{-1} n\right)$. \cite{GLM19} study a simpler algorithm with lower computational complexity ($O(n^2)$ versus $O(n^3)$), requiring $\sqrt{1-\rho^2}$ to be $O(n^{-7/6})$. In any case, $\rho$ needs to tend to $1$, and the regimes in which these methods work well are far from the fundamental limits established in this paper. The present paper thus corroborates the idea that matrix alignment may be computationally hard even  in the feasibility regime. In other words, the hard phase can be conjectured to be really wide for this reconstruction problem. Proving a result of that form however remains a very thorny question.

\subsection*{Paper organization}
We first define our notations at the beginning of Section \ref{section:preliminaries}, and then establish a control on correlations between energies of permutations, using Hanson-Wright inequality. The achievability result is proved in Section \ref{section:ach}: after showing that the classical first moment method fails, we take advantage of the correlation structure established before to handle the sharp bound. Then, second moment method is applied in Section \ref{section:conv} to show that lots of small perturbations of the true underlying permutation have lower energies, establishing the converse bound. Finally, some additional proofs are deferred to Appendix \ref{appendix}. 
The proof techniques are not far from those used by \cite{Dai19}, the main novelty being the use of correlation of energies, which is essential to both achievability and impossibility result.

\section{Preliminaries}\label{section:preliminaries}
\subsection{Definitions and notations}\label{subsection:def}
For any positive integer $n$, let $[n] = \left\lbrace 1,2,\ldots,n \right\rbrace $. For two positive sequences $\left\lbrace u_n\right\rbrace $ and $\left\lbrace v_n\right\rbrace $, denote $u_n = O(v_n)$ if there exists $C>0$ such that $u_n \leq Cv_n$ for all $n$. We will also write $u_n = o(v_n)$ (resp. $u_n = \omega(v_n)$) if $u_n/v_n \to 0$ (resp. $v_n/u_n \to 0$). All limits considered are taken when $n \to \infty$.
\paragraph*{Linear algebra} We work with the canonical euclidean norm $\|\cdot \|$ on $\dR^n$, and $\langle \cdot, \cdot \rangle$ the canonical inner product on $\dR^n$ or $\dR^{n \times n}$. For any $n \times n$ matrix $M$ with real entries, its \emph{Frobenius norm} $\|M\|_{F}$ and its \emph{operator norm} $\|M\|_{\mathrm{op}}$ are defined as follows:
\begin{equation*}
\|M\|_{F}  := \left(\sum_{1\leq i,j \leq n} A_{i,j}^2\right)^{1/2} \quad \mbox{and} \quad
\|M\|_{\mathrm{op}}  := \sup_{X \in \dR^n \setminus \left\lbrace 0\right\rbrace } \frac{\| M X \|}{\|X\|}. 
\end{equation*} Note that for any normal matrix (that is, if $M^T M = M M^T$), then $\|M\|_{\mathrm{op}}$ equals $\rho(M)$, the spectral radius of $M$.

\paragraph*{Probability} When working with model \eqref{eq:model}, we will denote by $\dP_A$ (resp. $\dE_A$) the conditional probability (resp. the conditional expectation) with respect to the random matrix $A$. Throughout the paper, $\cN(\mu,v)$ denotes a Gaussian variable (resp. vector) with mean $\mu$ and variance (resp. covariance matrix) $v$. Such a Gaussian variable (resp. vector) is called \emph{standard} if $\mu=0$ and $v=1$ (resp. $v$ is the identity matrix). We say that an event $\cA_n$ happens \emph{with high probability (w.h.p)} if $\dP(\cA_n) \to 1$ when $n \to \infty$.

\paragraph*{Permutations} 
We denote by $\cS_m$ the set of permutations of $[m]$. To any permutation $\sigma \in \cS_m$, we can associate its $m \times m$ matrix representation $\Sigma$ defined by $\Sigma_{i,j} = \ind{j = \sigma(i)}$. Define $\mathcal{F}_{\sigma}$ the set of \emph{fixed points} of $\sigma$:
\begin{equation}
\mathcal{F}_{\sigma} := \left\lbrace i \in [m], \sigma(i) = i\right\rbrace,
\end{equation} and denote $f_{\sigma} := \sharp \mathcal{F}_{\sigma}$. Similarly, we define the set of \emph{unfixed points} of $\sigma$:
\begin{equation}
\mathcal{D}_{\sigma} := [m] \setminus \mathcal{F}_{\sigma} = \left\lbrace i \in [m], \sigma(i) \neq i\right\rbrace,
\end{equation} and we denote $d_{\sigma} := \sharp \mathcal{D}_{\sigma}$. For any $d \in \left\lbrace 0,\ldots,m \right\rbrace $ we define $\cS_{m,d}$ the set of permutations of $\cS_{m}$ with exactly $d$ unfixed points. Note that $\sharp \cS_{m,1}=0$ and that we have the inequality
\begin{equation}\label{eq:ineq_S_n,d}
\sharp \cS_{m,d} = \binom{m}{m-d} \sharp\left\lbrace \sigma \in \cS_{d}, F_{\sigma} = 0 \right\rbrace \leq \binom{m}{m-d} d! \leq m^d.
\end{equation} 

Similarity between two permutations $\sigma, \sigma' \in \cS_n$ is measured by their \emph{overlap}:
\begin{equation*}\label{eq:overlap}
\ov(\sigma,\sigma') := \frac{1}{n} \sum_{i=1}^{n} \mathbf{1}_{\sigma(i)=\sigma'(i)} = \frac{1}{n} f_{\sigma^{-1} \circ \sigma'}. 
\end{equation*}

Observe that on a graph of size $n$, each permutation $\sigma$ of the vertices $[n]$ has a natural extension to a canonical permutation on edges $\sigma^{\mathrm{E}} : \binom{[n]}{2} \to \binom{[n]}{2}$ defined as follows:
\begin{equation*}\label{eq:sigma_E}
\sigma^{\mathrm{E}} : e = \left\{i,j\right\} \mapsto \sigma^{\mathrm{E}}(e) = \left \{ \sigma(i),\sigma(j)\right\}.
\end{equation*} Note that the mapping $\sigma \mapsto \sigma^{\mathrm{E}}$ is one-to-one as soon as $n\geq 3$, since for all $i \in [n]$ and $j\neq j' \in [n] \setminus \left\lbrace i\right\rbrace $, edges $\sigma^{\mathrm{E}}(\left\lbrace i,j\right\rbrace )$ and $\sigma^{\mathrm{E}}(\left\lbrace i,j'\right\rbrace )$ have only one node in common, which is $\sigma(i)$. We will use the notation $\mathcal{F}^{\mathrm{E}}_{\sigma} = \mathcal{F}_{\sigma^{\mathrm{E}}}$ (resp. $\mathcal{D}^{\mathrm{E}}_{\sigma} = \mathcal{D}_{\sigma^{\mathrm{E}}}$) the set of \emph{fixed edges} (resp. \emph{unfixed edges}) of $\sigma$. Similarly we denote ${f}^{\mathrm{E}}_{\sigma} = {f}_{\sigma^{\mathrm{E}}}$ and $d^{\mathrm{E}}_{\sigma} := d_{\sigma^{\mathrm{E}}}$, for brievity.

Note that $d^{\mathrm{E}}_{\sigma}$ and are $d_{\sigma}$ are closely tied, since for all $\sigma \in \cS_n$, we have the inequality
\begin{equation}\label{eq:ineq_d}
d_{\sigma} \left(n - \frac{d_{\sigma}}{2}\right) \leq d^{\mathrm{E}}_{\sigma} \leq d_{\sigma} \left(n - \frac{d_{\sigma}-1}{2}\right).
\end{equation}
Indeed, observe that 
\begin{itemize}
\item[$(i)$] the number of fixed edges is at least the number of pairs of fixed points, and
\item[$(ii)$] the number of fixed edges is exactly the number of pairs of fixed points plus the number of pairs $(i,j), i<j$ that are exchanged by $\sigma$ (that is, the number of transpositions), this number being at most $d_{\sigma}/2$.
\end{itemize}
These remarks give that
\begin{equation*}
\binom{n-d_{\sigma}}{2} \leq \binom{n}{2} - d^{\mathrm{E}}_{\sigma} \leq \binom{n-d_{\sigma}}{2} + \frac{d_{\sigma}}{2},
\end{equation*} which directly implies \eqref{eq:ineq_d}.
\begin{rem}\label{remark:equiv_d}
Note that inequality \eqref{eq:ineq_d} gives the almost sure equivalents $d^{\mathrm{E}}_{\sigma} \sim d_{\sigma}n$ when $d_{\sigma}=o(n)$, and $d^{\mathrm{E}}_{\sigma} \sim \frac{1}{2}\alpha(2-\alpha)n^2$ when $d_{\sigma}=\alpha n$. In any case, $d^{\mathrm{E}}_{\sigma} \in \left[\frac{1}{2} d_{\sigma} n , d_{\sigma} n \right]$.
\end{rem}

\subsection{MAP estimation, relative energy of permutations}  
Since $\pi^*$ is uniformly chosen, we work in a Bayesian setting: let us evaluate the posterior probability density of $\pi^*$ given $A,B$:
\begin{flalign*}
p_{\pi^*|A,B}\left(\pi | a,b\right) & \propto p_{\pi^*,A,B}\left(\pi,a,b\right)\\
& \propto \exp\left(-  \frac{1}{2 (1-\rho^2)}\sum_{1\leq i<j\leq n} \left(B_{\pi(i),\pi(j)} - \rho A_{i,j} \right)^2 \right),
\end{flalign*} where $\propto$ indicates equality up to some factors that do not depend on $\sigma$. Define the \emph{loss function}
\begin{equation}\label{eq:L_pi}
\cL(\pi,A,B) := \sum_{1\leq i<j\leq n} \left(B_{\pi(i),\pi(j)} - \rho A_{i,j} \right)^2.
\end{equation} 

This loss function can also be viewed as the \emph{energy} associated with permutation $\pi$. Note that the posterior distribution is a Gibbs measure corresponding to this energy $\cL$, with inverse temperature $\beta =  \frac{1}{2(1-\rho^2)}$. The MAP (maximum a posteriori) estimator is thus
\begin{equation}\label{eq:MAP}
\hat{\pi}_{\MAP} := \argmax_{\pi} p_{\pi^*|A,B}\left(\pi | A,B\right) = \argmin_{\pi} \cL(\pi,A,B),
\end{equation} where the minimum is taken over all permutations $\pi \in \cS_n$. The above formulation \eqref{eq:MAP} is standard in the literature of graph and matrix alignment and meets the classical QAP formulation \eqref{eq:QAP}, since 
$$\argmin_{\pi} \cL(\pi,A,B) = \argmax_{\Pi} \langle A, \Pi B \Pi^T\rangle. $$

Theory from Bayesian optimal estimation guarantees that the best possible estimator for our exact reconstruction problem, in the Bayes risk sense, is $\hat{\pi}_{\MAP}$. Thus, if MAP estimator fails with high probability, then no estimator can succeed. This is why this estimator is often studied in exact reconstruction problems, as already done in previous works (\cite{Cullina2017,Cullina18data,Dai19}).

From now on we work conditionally on $\pi^*$ which can always be assumed to be $\mathrm{id}$ without loss of generality. More precisely, we will make the variable change $\sigma = \pi^* \circ \pi^{-1}$ ; writing $B$ as a function of $\sigma, A$ and $H$, \eqref{eq:L_pi} becomes
\begin{flalign*}
\cL(\sigma,A,H) & = \rho^2 \sum_{1\leq i<j\leq n} \left(A_{i,j} -  A_{\sigma(i),\sigma(j)} \right)^2 - 2\rho \sqrt{1-\rho^2} \sum_{1\leq i<j\leq n}  H_{i,j} \left(A_{i,j} -  A_{\sigma(i),\sigma(j)} \right)\\ & + (1-\rho^2) \sum_{1\leq i<j\leq n} H_{i,j} ^2 .
\end{flalign*} 
The loss function $\cL$ applied to the ground truth $\pi=\pi^*$ -- that is $\sigma = \id$ -- gives the energy reference $(1-\rho^2) \sum_{1\leq i<j\leq n} H_{i,j} ^2$. In order to compare any $\pi$ with $\pi^*$ -- or any $\sigma$ with $\id$ -- we further define the \emph{relative energy} of a permutation $\sigma \in \cS_n$:
\begin{flalign}
\delta(\sigma) &:= \cL(\sigma,A,H)-\cL(\mathrm{id},A,H) \nonumber\\
& = \rho^2 \sum_{1\leq i<j\leq n} \left(A_{i,j} -  A_{\sigma(i),\sigma(j)} \right)^2 - 2\rho \sqrt{1-\rho^2} \sum_{1\leq i<j\leq n}  H_{i,j} \left(A_{i,j} -  A_{\sigma(i),\sigma(j)} \right).\label{eq:delta_1} 
\end{flalign}
We next omit in our notations the dependency on $A$ and $H$ of $\delta(\sigma)$.
\begin{rem}
This relative energy $\delta$, also introduced by \cite{Cullina2017} for \ER graph alignment, is a measurement of the quality of a proposed alignment: $\delta(\sigma) \leq 0$ means that $\sigma^{-1} \circ \pi^*$ is a better alignment than $\pi^*$ for $A$ and $B$ in the posterior sense. A crucial set is then
\begin{equation*}\label{eq_Q}
\cQ := \left\lbrace \sigma \in \cS_n, \, \delta(\sigma) \leq 0 \right\rbrace.
\end{equation*}
Points of $\cQ$ are alignments on which the posterior distribution puts important weights -- at least greater weights than that of the ground truth -- or equivalently points of low energy. Note that $\id \in \cQ$. 
\end{rem}

In view of \eqref{eq:delta_1}, conditionally on $A$, $\delta(\sigma)$ is as follows:
\begin{equation}\label{eq:delta_gaussien_general}
\delta(\sigma) = \rho^2 v_\sigma - 2 \rho \sqrt{1-\rho^2} X_{\sigma},
\end{equation} where 
\begin{equation*}\label{eq:def_v_sigma}
v_\sigma := \sum_{1\leq i<j\leq n} \left(A_{i,j} -  A_{\sigma(i),\sigma(j)} \right)^2,
\end{equation*} and $X=(X_\sigma)_{\sigma \in \cS_{n}}$ is a Gaussian vector, centered, with covariance given by
\begin{equation*}\label{eq:def_c_sigma}
\cov(X_\sigma,X_{\sigma'}) = \sum_{1\leq i<j\leq n} \left(A_{i,j} -  A_{\sigma(i),\sigma(j)} \right)\left(A_{i,j} -  A_{\sigma'(i),\sigma'(j)} \right) := c_{\sigma,\sigma'}.
\end{equation*}Note that for all $\sigma \in \cS_n$, $c_{\sigma,\sigma} = v_\sigma$. Elaborating on the correlation structure of these relative energies is the object of the end of this section.

\subsection{Control of covariance structure of relative energies}
For all $\sigma, \sigma' \in \cS_{n}$, $c_{\sigma,\sigma'}$ can be written as follows
\begin{equation*}\label{eq:c_sigma_E}
c_{\sigma,\sigma'} = \sum_{e \in \binom{[n]}{2}} \left(A_e -  A_{\sigma^E(e)} \right) \left(A_e -  A_{\sigma'^E(e)} \right)
\end{equation*} and satisfies
\begin{equation*}\label{eq:c_exp}
\dE \left[c_{\sigma,\sigma'} \right] =
\sharp (\cD^{\mathrm{E}}_\sigma \cap \cD^{\mathrm{E}}_{\sigma'}) + \sharp (\cD^{\mathrm{E}}_\sigma \cap \cD^{\mathrm{E}}_{\sigma'} \cap \cF^{\mathrm{E}}_{\sigma^{-1} \circ \sigma'}).
\end{equation*} In particular,
\begin{equation*}\label{eq:v_exp}
\dE \left[v_{\sigma} \right] = d^{\mathrm{E}}_{\sigma} +d^{\mathrm{E}}_{\sigma} = 2 d^{\mathrm{E}}_{\sigma}.
\end{equation*} 

Random variables $c_{\sigma,\sigma'}$ only depend on the entries of $A$, which are Gaussian. Moreover, $c_{\sigma,\sigma'}$ being a quadratic form evaluated on a Gaussian vector, it can therefore be controlled using Hanson-Wright inequality:
\begin{lem}[Hanson-Wright inequality (\cite{HansonWright1971})] \label{lemma:HW_ineq}
	Let $X$ be a standard Gaussian vector, and $M$ a deterministic matrix. Then there exists a universal constant $c>0$ such that with probability at least $1-2\delta$:
	\begin{equation}\label{eq:HW_ineq}
	\big| X^T M X - \mathrm{Tr} M \big| \leq c \left(\|M\|_F \sqrt{\log(1/\delta)} + \|M\|_{\mathrm{op}} \log(1/\delta)\right).
	\end{equation}
\end{lem}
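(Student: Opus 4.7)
The plan is to reduce to the well-known case of a weighted sum of centered chi-squared variables, for which Bernstein-type concentration is elementary. First I would observe that the quantity $X^T M X$ depends only on the symmetric part $M_s := (M + M^T)/2$, and that $\Tr M = \Tr M_s$, $\|M_s\|_F \leq \|M\|_F$, $\|M_s\|_{\mathrm{op}} \leq \|M\|_{\mathrm{op}}$, so we may assume $M$ symmetric. Diagonalize $M = U D U^T$ with $D = \diag(\lambda_1, \ldots, \lambda_n)$ and set $Y = U^T X$: by rotational invariance of the standard Gaussian, $Y$ is again standard Gaussian, and
\begin{equation*}
X^T M X - \Tr M = \sum_{i=1}^n \lambda_i (Y_i^2 - 1),
\end{equation*}
where the $Y_i$ are i.i.d.\ $\cN(0,1)$. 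Moreover $\sum_i \lambda_i^2 = \|M\|_F^2$ and $\max_i |\lambda_i| = \|M\|_{\mathrm{op}}$.

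Next I would control the moment generating function of this sum. A direct computation using the MGF of a $\chi^2_1$ variable gives, for $|t| < 1/2$,
\begin{equation*}
\dE\!\left[e^{t(Y_i^2-1)}\right] = \frac{e^{-t}}{\sqrt{1-2t}} \leq \exp(2 t^2) \quad \text{for } |t| \leq 1/4.
\end{equation*}
Hence, for any $s$ with $|s| \leq 1/(4\|M\|_{\mathrm{op}})$, the independence of the $Y_i$ yields
\begin{equation*}
\dE\!\left[\exp\!\Big(s \sum_i \lambda_i (Y_i^2 - 1)\Big)\right] \leq \exp\!\Big(2 s^2 \sum_i \lambda_i^2\Big) = \exp\!\big(2 s^2 \|M\|_F^2\big).
\end{equation*}
This is precisely a sub-exponential MGF bound with variance proxy $\|M\|_F^2$ and scale parameter $\|M\|_{\mathrm{op}}$.

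Finally I would apply the Chernoff method: for $t > 0$,
\begin{equation*}
\dP\!\left(\sum_i \lambda_i (Y_i^2 - 1) > t\right) \leq \exp\!\big(2 s^2 \|M\|_F^2 - s t\big),
\end{equation*}
and optimize $s \in (0, 1/(4\|M\|_{\mathrm{op}})]$. The optimizer is $s = t/(4\|M\|_F^2)$ when this lies in the admissible range, which gives a Gaussian tail $\exp(-t^2/(8\|M\|_F^2))$; otherwise the boundary choice $s = 1/(4\|M\|_{\mathrm{op}})$ gives an exponential tail $\exp(-t/(8\|M\|_{\mathrm{op}}))$. Combining both regimes and applying the same argument to $-\sum_i \lambda_i(Y_i^2-1)$ yields
\begin{equation*}
\dP\big(|X^T M X - \Tr M| > t\big) \leq 2 \exp\!\left(-c' \min\!\left(\frac{t^2}{\|M\|_F^2}, \frac{t}{\|M\|_{\mathrm{op}}}\right)\right).
\end{equation*}
Setting the right-hand side equal to $2\delta$ and inverting immediately produces the claimed bound $t \leq c(\|M\|_F \sqrt{\log(1/\delta)} + \|M\|_{\mathrm{op}} \log(1/\delta))$. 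There is no serious obstacle here; the only mildly delicate point is keeping track of which of the two tail regimes dominates when inverting the bound, which is a routine matter of absorbing universal constants into $c$.
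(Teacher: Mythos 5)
The paper does not actually prove this lemma: it states it and refers the reader to \cite{HansonWright1971} for the proof, where the result is established for general sub-Gaussian random vectors by considerably heavier machinery (decoupling and comparison arguments). Your self-contained proof is correct and takes a genuinely different, more elementary route that is available precisely because the lemma only asserts the statement for a standard Gaussian vector. Reducing to the symmetric part of $M$, diagonalizing $M = UDU^T$, and invoking rotational invariance of the Gaussian law collapses everything to a weighted sum of independent centered $\chi^2_1$ variables, $\sum_i \lambda_i(Y_i^2 - 1)$, and your MGF bound $\dE[e^{t(Y_i^2-1)}] = e^{-t}(1-2t)^{-1/2} \leq e^{2t^2}$ for $|t| \leq 1/4$ is correct (one can check $-t - \tfrac12\log(1-2t) = \sum_{k\geq 2}(2t)^k/(2k) \leq t^2/(1-2t) \leq 2t^2$ there). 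The Chernoff optimization with the cap $|s| \leq 1/(4\|M\|_{\mathrm{op}})$ then yields the Bernstein-type two-regime tail, and inverting it absorbs cleanly into a single universal constant $c$. What your route buys is complete transparency and minimal prerequisites; what it gives up is generality beyond the Gaussian case, which the present paper does not need. This is the standard textbook proof for the Gaussian specialization and is entirely adequate here.
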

We refer to \cite{HansonWright1971} for a proof. Inequality \eqref{eq:HW_ineq} used in our context leads to the following

\begin{cor} \label{corollary:control_C}
	There exists a universal constant $C>0$ such that with high probability, for every $d \in \left\lbrace 2,\ldots, n \right\rbrace$, for all $\sigma,\sigma' \in \mathcal{S}_{n,d}$, 
	\begin{equation*}\label{eq:control_C}
	\big|c_{\sigma,\sigma'} - \sharp (\cD^{\mathrm{E}}_\sigma \cap \cD^{\mathrm{E}}_{\sigma'}) - \sharp (\cD^{\mathrm{E}}_\sigma \cap \cD^{\mathrm{E}}_{\sigma'} \cap \cF^{\mathrm{E}}_{\sigma^{-1} \circ \sigma'}) \big|  \leq   C d\sqrt{n \log n}.
	\end{equation*}
\end{cor}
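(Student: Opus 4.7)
The plan is to view each $c_{\sigma,\sigma'}$ as a quadratic form in the standard Gaussian vector $\vec{A} := (A_{i,j})_{1 \leq i<j \leq n} \in \dR^N$ (with $N = \binom{n}{2}$), apply Hanson-Wright to control its deviation from $\dE[c_{\sigma,\sigma'}]$, and finally perform a union bound over pairs $\sigma,\sigma' \in \cS_{n,d}$ and over $d$.

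\emph{Step 1: quadratic form structure.} For $\sigma \in \cS_n$, let $Q_\sigma$ be the $N \times N$ permutation matrix associated with the induced map $\sigma^{\mathrm{E}}$ on $\binom{[n]}{2}$, and set $R_\sigma := I - Q_\sigma$. Then $(R_\sigma \vec{A})_e = A_e - A_{\sigma^{\mathrm{E}}(e)}$, hence
\begin{equation*}
c_{\sigma,\sigma'} = \langle R_\sigma \vec{A}, R_{\sigma'} \vec{A} \rangle = \vec{A}^T M_{\sigma,\sigma'} \vec{A}, \qquad M_{\sigma,\sigma'} := \tfrac{1}{2}\left(R_\sigma^T R_{\sigma'} + R_{\sigma'}^T R_\sigma\right),
\end{equation*}
and $\Tr(M_{\sigma,\sigma'}) = \dE[c_{\sigma,\sigma'}] = \sharp(\cD^{\mathrm{E}}_\sigma \cap \cD^{\mathrm{E}}_{\sigma'}) + \sharp(\cD^{\mathrm{E}}_\sigma \cap \cD^{\mathrm{E}}_{\sigma'} \cap \cF^{\mathrm{E}}_{\sigma^{-1}\circ \sigma'})$ as recalled before the statement.

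\emph{Step 2: norm bounds.} Since $Q_\sigma, Q_{\sigma'}$ are permutation matrices, $\|R_\sigma\|_{\mathrm{op}}, \|R_{\sigma'}\|_{\mathrm{op}} \leq 2$, and thus $\|M_{\sigma,\sigma'}\|_{\mathrm{op}} \leq 4$. For the Frobenius norm, I would use $\|M_{\sigma,\sigma'}\|_F \leq \|R_\sigma^T R_{\sigma'}\|_F \leq \|R_\sigma\|_F \cdot \|R_{\sigma'}\|_{\mathrm{op}}$, together with
\begin{equation*}
\|R_\sigma\|_F^2 = \Tr\!\left((I-Q_\sigma)^T(I-Q_\sigma)\right) = 2N - 2 f^{\mathrm{E}}_\sigma = 2\, d^{\mathrm{E}}_\sigma \leq 2\, d_\sigma \cdot n,
\end{equation*}
where the last inequality uses \eqref{eq:ineq_d}. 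Hence for $\sigma,\sigma' \in \cS_{n,d}$, $\|M_{\sigma,\sigma'}\|_F^2 \leq 8\, dn$ and $\|M_{\sigma,\sigma'}\|_{\mathrm{op}} \leq 4$.

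\emph{Step 3: Hanson-Wright and union bound.} Fix a constant $K > 3$ to be chosen, and apply Lemma \ref{lemma:HW_ineq} with $\log(1/\delta) = K d \log n$. We obtain that with probability at least $1 - 2 n^{-Kd}$,
\begin{equation*}
\big|c_{\sigma,\sigma'} - \dE[c_{\sigma,\sigma'}]\big| \leq c\left(\sqrt{8 d n} \cdot \sqrt{K d \log n} + 4 K d \log n\right) \leq C\, d\sqrt{n \log n}
\end{equation*}
for some universal $C > 0$, since the first term dominates. Taking a union bound and using \eqref{eq:ineq_S_n,d} to control $\sharp \cS_{n,d} \leq n^d$, the probability that the corollary fails is at most
\begin{equation*}
\sum_{d=2}^{n} \sharp \cS_{n,d}^{\,2} \cdot 2 n^{-Kd} \leq \sum_{d=2}^{n} 2\, n^{(2-K)d} = o(1)
\end{equation*}
as soon as $K > 3$, which concludes.

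\emph{Main obstacle.} The delicate part is the Frobenius norm estimate: a naive entrywise bound would yield $\|M_{\sigma,\sigma'}\|_F^2 = O(d^{\mathrm{E}}_\sigma + d^{\mathrm{E}}_{\sigma'})$, but one needs to be careful that cross-terms between $R_\sigma^T R_{\sigma'}$ and $R_{\sigma'}^T R_\sigma$ do not inflate the bound. Writing $M_{\sigma,\sigma'}$ through the factorization $R_\sigma^T R_{\sigma'}$ and exploiting the unitary nature of $Q_\sigma, Q_{\sigma'}$ is precisely what makes the estimate tight enough for the union bound to close and produce the sharp $d\sqrt{n\log n}$ scaling.
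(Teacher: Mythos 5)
Your proposal is correct and takes essentially the same route as the paper: write $c_{\sigma,\sigma'}$ as a quadratic form in the vectorized matrix $A$, note that the trace of the defining matrix equals the claimed expectation, bound its operator norm by $4$ and its Frobenius norm by a constant times $\sqrt{dn}$ (you via $\|AB\|_F\leq\|A\|_F\|B\|_{\mathrm{op}}$, the paper via a triangle inequality and orthogonal invariance, yielding the same estimate), then apply Hanson--Wright with a $d$-dependent confidence level and union bound over $\sharp\cS_{n,d}^2\leq n^{2d}$ pairs and over $d$. The symmetrization of $M_{\sigma,\sigma'}$ and the free parameter $K$ are cosmetic variations with no bearing on the argument.
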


\begin{proof}
	We first make the following observation: for any $\sigma,\sigma' \in \mathcal{S}_n$, 
	\begin{flalign*}
	c_{\sigma,\sigma'} & = \sum_{e} \left(A_{e} - A_{\sigma(e)}\right) \left(A_{e} - A_{\sigma'(e)}\right)\\
	& = A^T (I_N - \Sigma)^T(I_N - \Sigma') A,
	\end{flalign*} where $A = (A_e)_{e}$ is viewed as a standard Gaussian vector of size $N=\binom{n}{2}$, and $\Sigma$ (resp. $\Sigma'$) is the $N \times N$ permutation matrix associated with $\sigma^E$ (resp. $\sigma'^E$). Note that
	\begin{flalign*}
	\Tr ( (I_N - \Sigma)^T(I_N - \Sigma')) &= N - f^{\mathrm{E}}_{\sigma} - f^{\mathrm{E}}_{ \sigma'} +f^{\mathrm{E}}_{\sigma^{-1} \circ \sigma'} \\ 
	&\overset{(a)}{=} \sharp (\cD^{\mathrm{E}}_\sigma \cap \cD^{\mathrm{E}}_{\sigma'}) + \sharp (\cD^{\mathrm{E}}_\sigma \cap \cD^{\mathrm{E}}_{\sigma'} \cap \cF^{\mathrm{E}}_{\sigma^{-1} \circ \sigma'}),
	\end{flalign*} where $(a)$ is obtained by noticing that 
	\begin{flalign*}
	\sharp (\cD^{\mathrm{E}}_\sigma \cap \cD^{\mathrm{E}}_{\sigma'}) + \sharp (\cD^{\mathrm{E}}_\sigma \cap \cD^{\mathrm{E}}_{\sigma'} \cap \cF^{\mathrm{E}}_{\sigma^{-1} \circ \sigma'}) & = d^{\mathrm{E}}_\sigma + d^{\mathrm{E}}_{\sigma'} - \sharp (\cD^{\mathrm{E}}_\sigma \cup \cD^{\mathrm{E}}_{\sigma'}) + f^{\mathrm{E}}_{\sigma^{-1} \circ \sigma'} - \sharp(\cF^{\mathrm{E}}_{\sigma} \cup \cF^{\mathrm{E}}_{\sigma'})
	\end{flalign*} and that 
	$
	\sharp (\cD^{\mathrm{E}}_\sigma \cup \cD^{\mathrm{E}}_{\sigma'}) + \sharp(\cF^{\mathrm{E}}_{\sigma} \cup \cF^{\mathrm{E}}_{\sigma'}) = N.
	$
	For a fixed $d$ and $\sigma,\sigma' \in \mathcal{S}_{n,d}$, one has
	\begin{flalign*}
	\|(I_N - \Sigma)^T(I_N - \Sigma')\|_F &\leq \|(I_N - \Sigma')\|_F + \| \Sigma^T(I_N - \Sigma')\|_F \\
	& = 2 \|(I_N - \Sigma') \|_F \\
	& \leq 2 \sqrt{2 d^{\mathrm{E}}_{\sigma'}} \\
	& \leq 2 \sqrt{2 d n},
	\end{flalign*} where we used \eqref{eq:ineq_d} in the last step. One also has
	\begin{flalign*}
	\|(I_N - \Sigma)^T(I_N - \Sigma')\|_{\mathrm{op}} &\leq \rho(I_N-\Sigma) \times \rho(I_N - \Sigma') \\
	& \leq 2 \times 2 = 4. 
	\end{flalign*} Taking $\delta = n^{-(2d+2)}$, Lemma \ref{lemma:HW_ineq} gives that with probability at least $1-2\delta$, 
	\begin{flalign}
	\label{eq:concentration_c}
	\big|c_{\sigma,\sigma'} - \sharp (\cD^{\mathrm{E}}_\sigma \cap \cD^{\mathrm{E}}_{\sigma'}) - \sharp (\cD^{\mathrm{E}}_\sigma \cap \cD^{\mathrm{E}}_{\sigma'} \cap \cF^{\mathrm{E}}_{\sigma^{-1} \circ \sigma'}) \big|  & \leq c \left(2\sqrt{2}\sqrt{d(2d+2)} \sqrt{n \log n} + 4 (2d+2) \log n\right) \nonumber \\
	& \leq C d\sqrt{n \log n}, 
	\end{flalign}for some universal constant $C>0$. The proof is concluded by checking that this inequality holds w.h.p. for all $d$ and $\sigma,\sigma' \in \cS_{n,d}$ : the probability that at least one pair $(\sigma,\sigma')$ contradicts \eqref{eq:concentration_c} is upper bounded by
	\begin{equation*}
	\sum{d=2}^{n} \left(\sharp \cS_{n,d}\right)^2 \times 2 \delta \leq 2 \sum{d=2}^{n} 2n^{2d-2d-2} =O(1/n) = o(1).
	\end{equation*}
\end{proof}

In the rest of the paper we define the event
\begin{equation}\label{eq:event_A}
\cA := \left\lbrace \forall d \in [n], \forall \sigma,\sigma' \in \cS_{n,d}, \big|c_{\sigma,\sigma'} - \sharp (\cD^{\mathrm{E}}_\sigma \cap \cD^{\mathrm{E}}_{\sigma'}) - \sharp (\cD^{\mathrm{E}}_\sigma \cap \cD^{\mathrm{E}}_{\sigma'} \cap \cF^{\mathrm{E}}_{\sigma^{-1} \circ \sigma'}) \big|   \leq   C d\sqrt{n \log n} \right\rbrace,
\end{equation} which happens with probability $1-o(1)$ by Corollary \ref{corollary:control_C}.

\section{Achievability result}\label{section:ach}
In this section, we establish Theorem \ref{theorem_exact_r-ach}.
\subsection{Failure of first moment method}\label{subsection:first_moment}
For the achievability result, the first strategy is to use the union bound (or first moment method) to show that under condition \eqref{eq:cond_ach_glo} of Theorem \ref{theorem_exact_r-ach},
\begin{equation*}\label{eq:union_bound_ach}
\dP\left(\mbox{MAP fails} \right) = \dP\left( \hat{\pi}_{\MAP} \neq \pi \right) =o(1).
\end{equation*} As described hereafter, this naive method does not give the correct bound. Indeed, let us evaluate $\dP\left(\delta(\sigma) \leq 0 \right)$ for a given $\sigma \neq \id$. In view of the conditional distribution \eqref{eq:delta_gaussien_general} of $\delta(\sigma)$ we have
\begin{flalign*}
\dP\left(\delta(\sigma) \leq 0 \right) & = \dE \left[ \dE_A \left[\ind{\delta(\sigma) \leq 0 } \right] \right] = \dE \left[ \dP_A \left( \rho^2 v_\sigma - 2 \rho \sqrt{1-\rho^2} X_{\sigma} \leq 0\right)\right] \\
& = \dE \left[ \dP_A \left( \rho^2 v_\sigma - 2 \rho \sqrt{1-\rho^2} \sqrt{v_\sigma} \cdot \cN(0,1) \leq 0\right)\right] \\
& = \dE \left[ \dP_A \left(\cN(0,1) \geq \frac{\rho \sqrt{v_\sigma}}{2 \sqrt{1-\rho^2}}  \right)\right] \leq \dE \left[ \exp \left( - \frac{\rho^2}{8(1-\rho^2)} v_\sigma\right)\right],
\end{flalign*} where we used standard Gaussian concentration in the last inequality: $\dP\left(\cN(0,1) \geq t  \right) \leq \exp(-t^2/2)$. Note that on event $\cA$ defined in \eqref{eq:event_A} and inequality \eqref{eq:ineq_d},
\begin{equation*}\label{eq:controle_v_first_mom}
\forall d \in [n], \forall \sigma \in \cS_{n,d}, \, v_\sigma \geq 2 d^{\mathrm{E}}_{\sigma} - C d_{\sigma} \sqrt{n \log n} \geq  d^{\mathrm{E}}_{\sigma} \left(2-2\eps_n\right),
\end{equation*}
setting $\eps_n=2C \sqrt{\log n/n}$. Union bound then gives
\begin{flalign*}
\dP\left(\mbox{MAP fails} \right) & \leq \dP\left(\exists \sigma \in \cS_n \setminus \left\{\id\right\}, \; \delta(\sigma) \leq 0 \right) \\
& \leq o(1) + \sum_{\sigma \in \cS_n \setminus \left\{\id\right\}} \dE \left[ \exp \left( - \frac{\rho^2}{8(1-\rho^2)} v_\sigma\right) \ind{\cA}\right] \\
& \leq o(1) + \sum_{\sigma \in \cS_n \setminus \left\{\id\right\}} \exp \left( - \frac{\rho^2}{8(1-\rho^2)} (2-2\eps_n) d^{\mathrm{E}}_{\sigma}\right) \\
& \leq o(1) + \sum_{\sigma \in \cS_n \setminus \left\{\id\right\}} \exp \left( - \frac{\rho^2}{4} (1-\eps_n) d^{\mathrm{E}}_{\sigma}\right),
\end{flalign*}where we used $1/(1-\rho^2)>1$ in the last step. Let us now study the last sum, distinguishing the terms according to $d:=d_\sigma$: 
\begin{itemize}
	\item As long as $d=o(n)$, by Remark \ref{remark:equiv_d}, the terms behave like $\exp \left(- \frac{\rho^2}{4} (1-\eps_n) d n \right)$. By \eqref{eq:ineq_S_n,d}, $\log \sharp \cS_{n,d} \leq d \log n$ so the partial sum is small if $\frac{\rho^2}{4} (1-\eps_n) n - \log n >0$, which gives the necessary condition $\rho^2 \geq 4 \frac{\log n}{n}$. 
	
	\item However, the situation is different when it comes to large values of $d$. For instance, let us study the contribution of \emph{derangements} to the sum (that is, $\sigma$ such that $d_\sigma=n$). Note that these derangements are very numerous (their number is $\sim e^{-1} n!$). Again by Remark \ref{remark:equiv_d}, their contribution is thus of order 
	\begin{equation*}
		e^{-1} n! \exp \left(\rho^2(1-\eps_n) n^2/8(1-o(1))\right) = \exp \left(\left(n \log n - \rho^2 n^2/8\right)(1-o(1))\right),
	\end{equation*} which gives a more restrictive condition: $\rho^2 \geq 8 \frac{\log n}{n}$. 
\end{itemize}

As seen here-above, this naive first moment method enables to ensure feasibility of exact reconstruction only in the regime where $\rho^2 \geq 8 \frac{\log n}{n}$, which is not the optimal one. This bound is actually quite rough here, because the variables are substantially correlated when $d$ gets large and their contributions make the first moment explode. The next section takes advantage of these correlations in order to get access to the sharp bound.

\subsection{Improving the first moment method with correlations.}
For all $d \in \left\lbrace 2, \ldots, n \right\rbrace $, define $\cE_d$ the event:
\begin{equation*}\label{eq:def_Ed}
\cE_d := \left\lbrace \exists \sigma \in \cS_{n,d}, \; \delta(\sigma) \leq 0 \right\rbrace.
\end{equation*} In this Section we will assume that
\begin{equation*}
\rho \geq (2+\eps)\sqrt{\frac{\log n}{n}},
\end{equation*} for some $\eps>0$. Recall that we work on the event $\cA$ defined in \eqref{eq:event_A}, and that conditionally on entries of matrix $A$, we can write
\begin{equation}\label{eq:delta_gaussien}
\delta(\sigma) = \rho^2 v_\sigma - 2 \rho \sqrt{1-\rho^2} X_{\sigma},
\end{equation} where $X=(X_\sigma)_{\sigma \in \cS_{n,d}}$ is a Gaussian vector, centered, with covariance given by $\cov(X_\sigma,X_{\sigma'}) = c_{\sigma,\sigma'}$. Also note that on event $\cA$, for all $d\leq \alpha n$ and $\sigma \in \cS_{n,d}$, inequality \eqref{eq:ineq_d} gives 
\begin{flalign}\label{eq:equiv_v_alpha_n}
v_\sigma &  = 2  d^{\mathrm{E}}_{\sigma} + O(d \sqrt{n \log n}) \nonumber \\
& \geq 2  d(n- d/2) + O(d \sqrt{n \log n}) \nonumber \\
& \geq (1-o(1))2dn (1-\alpha/2) \, .
\end{flalign} In view of \eqref{eq:equiv_v_alpha_n}, as previously done in Section \ref{subsection:first_moment}, naive first moment method may suffice for $d \leq \alpha n$:
\begin{flalign*}
\dP\left(\bigcup_{2 \leq d \leq \alpha n} \cE_d \right) & \leq o(1) + \sum_{d=2}^{\alpha n} \sharp \cS_{n,d} \times \dP\left(\cN(0,1) \geq \frac{\rho \sqrt{v_{\sigma}}}{2 \sqrt{1-\rho^2}} \cap \cA\right)\\
& \leq o(1) + \sum_{d=2}^{\alpha n} \sharp \cS_{n,d} \times \dP\left(\cN(0,1) \geq (1+\eps/2) \sqrt{2d \log n (1-\alpha/2) }(1-o(1)) \right)\\
& \leq o(1) + \sum_{d=2}^{\alpha n} \exp \left(d \log n -  d \log n (1+\eps) (1-\alpha/2) + o(d \log n)\right),
\end{flalign*} which is $o(1)$ as soon as $\alpha < \alpha_0 := \frac{2 \eps}{1-\eps/2}$. It then remains to control the probabilities $\dP(\cE_d)$ for $d \geq \alpha_0 n$. 
As mentioned earlier, we take advantage of the correlation structure in \eqref{eq:delta_gaussien}. More precisely, we show that all variables $X_\sigma$ at a given level $d=\alpha n$ have substantial positive covariance when compared to their variance -- of order $\alpha (2-\alpha) n^2$ on $\cA$ by \eqref{eq:equiv_v_alpha_n} -- as shown in Figure \ref{fig:plot_f}. To do so, we derive an appropriate lower bound for $c_{\sigma,\sigma'}$ for $\sigma,\sigma' \in \cS_{n,\alpha n}$. This is the scope of the following Lemma:
\begin{lem}\label{lemma:min_corr_delta}
	With high probability, there exists a universal constant $C_1>0$ such that for any $d = \alpha n$ with fixed $\alpha>0$ and $\sigma,\sigma' \in \cS_{n,\alpha n}$:
	\begin{equation*}\label{eq:min_corr_delta}
	\cov(X_{\sigma},X_{\sigma'}) = c_{\sigma,\sigma'} \geq f(\alpha) n^2 - C_1 n^{3/2} \log^{1/2} n ,
	\end{equation*}with
	\begin{equation}\label{eq:f_alpha}
	f(\alpha) := \left\{
	\begin{array}{ll}
	\alpha^2 & \mbox{if } \alpha<1/2 \\
	\alpha^2- \frac{1}{2}(2\alpha-1)^2 & \mbox{if } \alpha \geq 1/2
	\end{array}
	\right.
	\end{equation}
	Thus for any $\eps'>0$, with high probability, for any $d=\alpha n$ with fixed $\alpha>0$,
	\begin{equation*}\label{eq:max_control}
	\max_{\sigma \in \cS_{n,\alpha n}} X_\sigma \leq \sqrt{2 \alpha \left(\alpha(2-\alpha)-f(\alpha)\right)} n^{3/2}\log^{1/2} n + (2+\eps') n \log^{1/2} n.
	\end{equation*}
\end{lem}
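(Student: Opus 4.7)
The plan is to handle the two claims in turn: first a combinatorial lower bound on the covariances $c_{\sigma,\sigma'}$, and then a Slepian-type comparison that turns $\max_\sigma X_\sigma$ into a maximum of essentially independent Gaussians plus a single ``common'' Gaussian.

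For the lower bound on $c_{\sigma,\sigma'}$, I would restart from the identity derived en route in the proof of Corollary \ref{corollary:control_C}: on event $\cA$ one has
\[
c_{\sigma,\sigma'} \geq \sharp(\cD^{\mathrm{E}}_\sigma \cap \cD^{\mathrm{E}}_{\sigma'}) + \sharp(\cD^{\mathrm{E}}_\sigma \cap \cD^{\mathrm{E}}_{\sigma'} \cap \cF^{\mathrm{E}}_{\sigma^{-1} \circ \sigma'}) - C\alpha n \sqrt{n \log n}.
\]
Dropping the nonnegative second intersection and applying inclusion--exclusion, one gets $\sharp(\cD^{\mathrm{E}}_\sigma \cap \cD^{\mathrm{E}}_{\sigma'}) = d^{\mathrm{E}}_\sigma + d^{\mathrm{E}}_{\sigma'} - \binom{n}{2} + \sharp(\cF^{\mathrm{E}}_\sigma \cap \cF^{\mathrm{E}}_{\sigma'})$. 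Since $|\cF_\sigma| = |\cF_{\sigma'}| = (1-\alpha) n$, the inclusion--exclusion at the vertex level gives $|\cF_\sigma \cap \cF_{\sigma'}| \geq \max(0, (1-2\alpha) n)$, and every pair of jointly fixed vertices contributes a jointly fixed edge, so $|\cF^{\mathrm{E}}_\sigma \cap \cF^{\mathrm{E}}_{\sigma'}| \geq \binom{\max(0, (1-2\alpha)n)}{2}$. This equals $\tfrac{(1-2\alpha)^2}{2}n^2 - O(n)$ when $\alpha \leq 1/2$ and vanishes otherwise. Combining with the lower bound $d^{\mathrm{E}}_\sigma \geq d_\sigma(n - d_\sigma/2) = \tfrac{\alpha(2-\alpha)}{2}n^2 - O(n)$ from \eqref{eq:ineq_d}, a direct calculation in each of the two regimes recovers precisely the piecewise expression $f(\alpha) n^2 - O(n)$, and reabsorbing the Hanson--Wright error gives $c_{\sigma,\sigma'} \geq f(\alpha) n^2 - C_1 n^{3/2}\sqrt{\log n}$.

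For the maximum bound, I would set $c_0 := f(\alpha) n^2 - C_1 n^{3/2}\sqrt{\log n}$ (deterministic) and condition on $A$ satisfying $\cA$, so that $c_{\sigma,\sigma'} \geq c_0$ for all $\sigma \neq \sigma'$ while $v_\sigma \leq V_0 := \alpha(2-\alpha) n^2 + C_2 n^{3/2}\sqrt{\log n}$. Introducing independent standard Gaussians $W$ and $(Z_\sigma)_{\sigma \in \cS_{n,\alpha n}}$, define
\[
\tilde Y_\sigma := \sqrt{c_0}\, W + \sqrt{v_\sigma - c_0}\, Z_\sigma.
\]
Then $(\tilde Y_\sigma)$ is a centered Gaussian process with $\var(\tilde Y_\sigma) = v_\sigma = \var(X_\sigma)$ and $\cov(\tilde Y_\sigma, \tilde Y_{\sigma'}) = c_0 \leq c_{\sigma,\sigma'}$, so Slepian's inequality applied conditionally on $A$ yields $\dP_A(\max_\sigma X_\sigma > t) \leq \dP_A(\max_\sigma \tilde Y_\sigma > t)$ for every $t$. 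Bounding coordinatewise, $\max_\sigma \tilde Y_\sigma \leq \sqrt{c_0}\, W + \sqrt{V_0 - c_0}\, \max_\sigma Z_\sigma$, and I would control the two contributions separately. A Gaussian tail bound gives $W \leq (2+\eps')\sqrt{\log n}$ with probability $1 - n^{-(2+\eps')^2/2}$, and since $c_0 \leq \alpha(2-\alpha) n^2 \leq n^2$, this contributes at most $(2+\eps') n \sqrt{\log n}$. A union bound over $N = |\cS_{n, \alpha n}| \leq n^{\alpha n}$ gives $\max_\sigma Z_\sigma \leq \sqrt{2 \alpha n \log n}(1+o(1))$ w.h.p., and substituting $V_0 - c_0 \leq (\alpha(2-\alpha) - f(\alpha)) n^2 (1 + o(1))$ this contributes at most $\sqrt{2\alpha(\alpha(2-\alpha) - f(\alpha))}\, n^{3/2} \sqrt{\log n}\,(1+o(1))$. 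Summing the two and relabeling $\eps'$ to absorb the $(1+o(1))$ factor delivers the stated bound.

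The main obstacle is the combinatorial step: the piecewise formula for $f(\alpha)$ is dictated by whether two permutations in $\cS_{n, \alpha n}$ are forced to share many fixed vertices ($\alpha < 1/2$), or may have disjoint fixed sets ($\alpha \geq 1/2$), so that the bound on $|\cF^{\mathrm{E}}_\sigma \cap \cF^{\mathrm{E}}_{\sigma'}|$ changes regime exactly at $\alpha = 1/2$. Once this is in hand, the Slepian decomposition is the key conceptual move: extracting a common Gaussian $\sqrt{c_0} W$ shaves the effective variance used against the $n^{\alpha n}$ union bound from $v_\sigma$ down to $v_\sigma - c_0$, which is precisely what replaces the overly large factor $\sqrt{\alpha(2-\alpha)}$ appearing in a naive first-moment calculation (as in Section \ref{subsection:first_moment}) by the sharp factor $\sqrt{\alpha(2-\alpha) - f(\alpha)}$ needed here.
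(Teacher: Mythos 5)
Your proposal is correct in substance and arrives at $f(\alpha)$ and the max bound, but it departs from the paper's proof in interesting ways at both steps; the departures are worth spelling out.

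On the combinatorial lower bound, the paper parameterizes by $\beta=\sharp(\cD_\sigma\cap\cD_{\sigma'})/n$, directly constructs two disjoint families of jointly-deranged edges of total size $g(\alpha,\beta)n^2- \alpha n$ with $g(\alpha,\beta)=\tfrac12\beta^2+(1-2\alpha)\beta+\alpha^2$, and then minimizes over $\beta\in[0,\alpha]$. You bypass the optimization: the identity $\sharp(\cD^{\mathrm E}_\sigma\cap\cD^{\mathrm E}_{\sigma'})=d^{\mathrm E}_\sigma+d^{\mathrm E}_{\sigma'}-\binom n2+\sharp(\cF^{\mathrm E}_\sigma\cap\cF^{\mathrm E}_{\sigma'})$ plus the worst-case intersection of the fixed-vertex sets gives $f(\alpha)$ in one line, and a quick check confirms both regimes agree with \eqref{eq:f_alpha}. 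This is arguably cleaner; what it does not do is exhibit the extremal pair $(\sigma,\sigma')$, which the paper's derivation does make visible (the minimizing $\beta$ is $\max(0,2\alpha-1)$).

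On the Gaussian comparison, the paper builds a \emph{fixed} exchangeable vector with common variance $v_\alpha$ and common covariance $c_\alpha$, both \emph{lower} bounds, chosen so that $v_\alpha-c_\alpha=(\alpha(2-\alpha)-f(\alpha))n^2$ exactly, and then appeals to entrywise covariance domination $\cov(Z)\le\cov(X)$. Your version equalizes variances first by writing $\tilde Y_\sigma=\sqrt{c_0}W+\sqrt{v_\sigma-c_0}Z_\sigma$, which makes the Slepian application airtight, since Slepian's inequality requires matched variances (entrywise domination including the diagonal is not a valid hypothesis in general: take $X_i\equiv\xi$ and $Y_i\equiv 0.9\,\xi$ for a single standard Gaussian $\xi$). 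So your route is the more careful one here.

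The only thing to flag is the last sentence of your writeup: the $(1+o(1))$ coming from $\sqrt{V_0-c_0}=\sqrt{(\alpha(2-\alpha)-f(\alpha))}\,n+O(n^{1/2}\log^{1/2}n)$, multiplied against $\max_\sigma Z_\sigma=O(n^{1/2}\log^{1/2}n)$, produces an error of order $n\log n$, which cannot be absorbed into the stated additive term $(2+\eps')n\log^{1/2}n$. What you obtain is $\max_\sigma X_\sigma\le (1+o(1))\sqrt{2\alpha(\alpha(2-\alpha)-f(\alpha))}\,n^{3/2}\log^{1/2}n$, a slightly weaker but functionally equivalent statement; this is precisely the form in which the paper actually invokes the lemma in Section \ref{section:ach}, so the achievability proof goes through unchanged. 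The paper sidesteps this by subtracting the same $C_1 n^{3/2}\log^{1/2}n$ from both $v_\alpha$ and $c_\alpha$, making $v_\alpha-c_\alpha$ error-free, at the cost of the unequal-variance issue discussed above.
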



The proof of this Lemma is obtained by working on event $\cA$ defined in \eqref{eq:event_A}, and establishing a lower bound on $\sharp(\cD^{\mathrm{E}}_\sigma \cap \cD^{\mathrm{E}}_{\sigma'})$, which is simply the number of edges that are deranged both by $\sigma^E$ and $\sigma'^E$. It can be found in Appendix \ref{appendix:lower_bound_corr}.

\begin{figure}[H]
	\centering
	\includegraphics[width=0.8\textwidth]{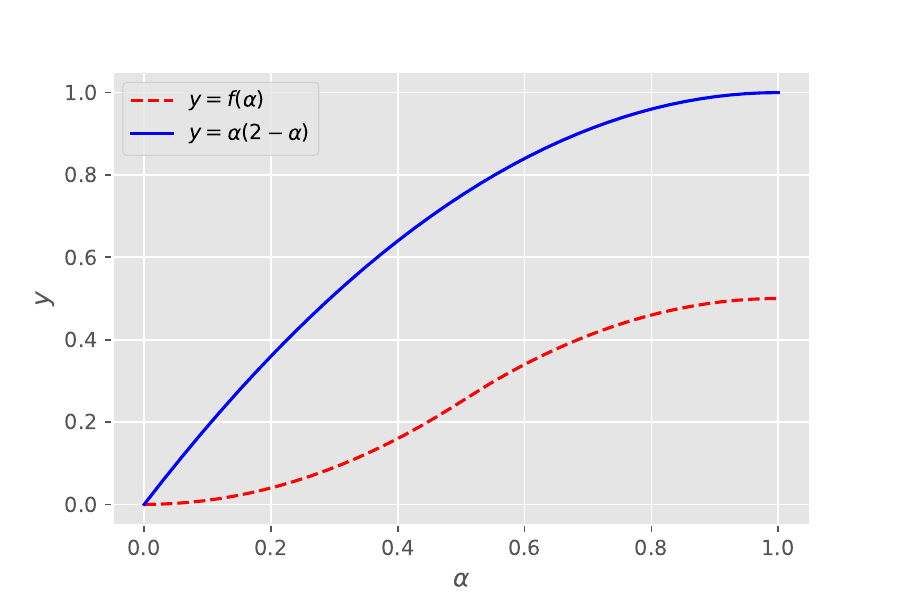}
	\caption{Plot on $[0,1]$ of normalized variance $\alpha (2-\alpha)$, together with the lower bound on the normalized covariance (function $f$) defined by \eqref{eq:f_alpha}.}
	\label{fig:plot_f} 
\end{figure}

Then, since $f(\alpha) \leq \alpha(2-\alpha)$ with elementary computations, according to Lemma \ref{lemma:min_corr_delta}, there is an event $\cB$ of probability $1-o(1)$  such that $$\max_{\sigma \in \cS_{n,d}} X_\sigma \leq (1+o(1)) \sqrt{2 \alpha \left(\alpha(2-\alpha)-f(\alpha)\right)} n^{3/2} \log^{1/2} n$$ holds for all $d=\alpha n$ with $\alpha>\alpha_0$. Note that on event $\cA \cap \cB$, for all $d  = \alpha n$ and $\sigma \in \cS_{n,d}$,
\begin{flalign*}
\rho^{-1}  \delta(\sigma) & \geq \rho v_{\sigma} - 2 \sqrt{1-\rho^2} \max_{\sigma \in \cS_{n,d}} X_\sigma \\
& \geq (1+o(1)) n^{3/2} \log^{1/2} n \left[(2+\eps) \alpha(2-\alpha) - 2 \sqrt{2 \alpha \left(\alpha(2-\alpha)-f(\alpha)\right)}\right]\\
& \geq (1+o(1)) \times 2 \times \left[\alpha(2-\alpha) - \sqrt{2 \alpha \left(\alpha(2-\alpha)-f(\alpha)\right)}\right] n^{3/2} \log^{1/2} n  \geq 0,
\end{flalign*} for $n$ large enough, since it can be easily checked (see Appendix \ref{appendix:final_function}) that 

\begin{lem}\label{lemma:final_function_study}
	For every $\alpha \in [0,1]$,
	\begin{equation} \label{eq:final_function_study}
	\alpha(2-\alpha) - \sqrt{2 \alpha \left(\alpha(2-\alpha)-f(\alpha)\right)} \geq 0.
	\end{equation}
\end{lem}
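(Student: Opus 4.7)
The plan is to reduce \eqref{eq:final_function_study} to a polynomial inequality and then split on the two branches of $f$.

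First, I would check that both sides of \eqref{eq:final_function_study} are nonnegative on $[0,1]$, which makes squaring an equivalence. The radicand is nonnegative because $f(\alpha) \leq \alpha(2-\alpha)$ on $[0,1]$: for $\alpha < 1/2$ this reads $\alpha^2 \leq \alpha(2-\alpha)$, i.e.\ $\alpha \leq 1$; for $\alpha \geq 1/2$ one has $\alpha(2-\alpha) - f(\alpha) = 2\alpha(1-\alpha) + \tfrac{1}{2}(2\alpha-1)^2 \geq 0$. After squaring, discarding the trivial case $\alpha = 0$, dividing by $\alpha$ and rearranging, \eqref{eq:final_function_study} becomes the cleaner
\[
2 f(\alpha) \geq \alpha^2 (2-\alpha).
\]

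For $\alpha \in [0,1/2]$ this reads $2\alpha^2 \geq \alpha^2(2-\alpha)$, equivalent to $\alpha \geq 0$, which is trivial. For $\alpha \in [1/2,1]$ it reads $2\alpha^2 - (2\alpha-1)^2 \geq 2\alpha^2 - \alpha^3$, i.e.
\[
P(\alpha) := \alpha^3 - (2\alpha-1)^2 = \alpha^3 - 4\alpha^2 + 4\alpha - 1 \geq 0.
\]
Since $P(1) = 0$, I would factor out $(\alpha-1)$ to obtain $P(\alpha) = (\alpha-1)(\alpha^2 - 3\alpha + 1)$. On $[1/2,1]$ the linear factor is nonpositive, and the quadratic factor has roots $(3 \pm \sqrt{5})/2 \approx 0.38$ and $2.62$, so it is also nonpositive on $[1/2,1]$ (it equals $-1/4$ at $\alpha=1/2$ and $-1$ at $\alpha=1$). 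The product of two nonpositive reals is nonnegative, closing the second case.

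The whole argument is routine polynomial algebra, so there is no genuine obstacle. The only step deserving a line of justification is the validity of the squaring reduction, which hinges on $f(\alpha) \leq \alpha(2-\alpha)$—a fact that is already consistent with Lemma \ref{lemma:min_corr_delta}, where $f(\alpha) n^2$ appears as a lower bound on a covariance whose associated variance is $\sim \alpha(2-\alpha) n^2$.
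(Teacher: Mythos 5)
Your proof is correct and follows essentially the same route as the paper's: square the inequality, divide by $\alpha$, reduce to $f(\alpha) \geq \alpha^2 - \alpha^3/2$ (equivalently $2f(\alpha)\geq\alpha^2(2-\alpha)$), and handle the $\alpha\geq 1/2$ branch via the factorization $\alpha^3-4\alpha^2+4\alpha-1=(\alpha-1)(\alpha^2-3\alpha+1)$. The only difference is that you explicitly verify nonnegativity of both sides before squaring, a step the paper leaves implicit.
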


Previous computations give that 
\begin{flalign*}
\dP\left(\bigcup_{d \geq \alpha n} \cE_d\right) & \leq 1 - \dP(\cA \cap \cB) =o(1),
\end{flalign*} and ends the proof of Theorem \ref{theorem_exact_r-ach}.

\section{Converse bound: second moment method for transpositions}\label{section:conv}
In this section, we prove Theorem \ref{theorem_exact_r-conv}. As already stated in the introduction, theory from Bayesian optimal estimation guarantees that the best possible estimator for our exact reconstruction problem, in the Bayes risk sense, is $\hat{\pi}_{\MAP}$. We will show that under assumption \eqref{eq:cond_imp_glo} of Theorem \ref{theorem_exact_r-conv}, this MAP estimator fails with high probability, which implies that no estimator can succeed.

This converse bound is obtained by a second moment argument, showing that with high probability, there are  lots of permutation $\tau \neq \id$ -- in fact, transpositions -- such that $\delta(\tau)$ is negative, that is, $\tau^{-1} \circ \pi^*$ is a substantially better alignment than $\pi^*$, with lowest energy. Let us denote $\cT_n \subset \cS_n$ the set of all permutations of $[n]$ that are transpositions. 
For all $\tau \in \cT_n$, we have $d^{\mathrm{E}}_\tau = 2(n-2)$. Corollary \ref{corollary:control_C} gives that the event
\begin{equation*}\label{eq:event_D}
\cC := \left\lbrace \forall \tau, \tau' \in \cT_n, 
\big|c_{\tau,\tau'} - \sharp (\cD^{\mathrm{E}}_\tau \cap \cD^{\mathrm{E}}_{\tau'}) - \sharp (\cD^{\mathrm{E}}_\tau \cap \cD^{\mathrm{E}}_{\tau'} \cap \cF^{\mathrm{E}}_{\tau \circ \tau'}) \big|  \leq   C\sqrt{n \log n} \right\rbrace 
\end{equation*} happens with probability $1-o(1)$ for $C>0$ large enough. In particular, on $\cC$, for $C>0$ large enough,
\begin{equation*}\label{eq:control_v_tau}
\forall \tau \in \cT_n, \; \left|v_\tau - 4n\right| \leq C \sqrt{n \log n}.
\end{equation*}

In this section we are working under the assumption \eqref{eq:cond_imp_glo} that we recall here:
\begin{equation*}
\rho^2 \leq \frac{4 \log n - \log \log n - \omega(1)}{n}
\end{equation*} 
We are about to show the following: under condition \eqref{eq:cond_imp_glo}, with high probability,
\begin{equation}\label{eq_converse_transp}
\sharp \left\lbrace \tau \in \cT_n, \, \delta(\tau) < 0 \right\rbrace =\omega(1).
\end{equation} To do so, we use the classical Paley-Zygmund inequality, proven in Appendix \ref{appendix:PZ_ineq} for self-containment:
\begin{lem}[Paley-Zygmund inequality]
\label{lemma_ineq_PZ}
Let $Y$ be a real random variable with positive mean and finite variance. Then for all $0 \leq c \leq 1$,
\begin{equation*}
\dP\left(Y \geq c \,{\dE\left[Y\right]}\right) \geq \left(1 - c\right)^2 \frac{\dE\left[Y\right]^2}{\dE\left[Y^2\right]}.
\end{equation*}
Thus, in the case where $\dE\left[Y^2\right] \sim \dE\left[Y\right]^2$, taking $c\to0$ implies that $Y\geq o(\dE\left[Y\right])$ with high probability.
\end{lem}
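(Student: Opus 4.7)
The plan is to use the classical Cauchy--Schwarz approach, assuming implicitly that $Y \geq 0$ (which is the case in the intended application, where $Y$ will be a counting variable of the form $\sharp\{\tau \in \cT_n : \delta(\tau) < 0\}$). First, I would split the expectation of $Y$ according to whether or not $Y$ exceeds the threshold $c\,\dE[Y]$:
\begin{equation*}
\dE[Y] = \dE[Y \mathbf{1}_{Y < c\,\dE[Y]}] + \dE[Y \mathbf{1}_{Y \geq c\,\dE[Y]}].
\end{equation*}
On the event $\{Y < c\,\dE[Y]\}$ the integrand is pointwise bounded by $c\,\dE[Y]$ (using $Y \geq 0$), so the first term is at most $c\,\dE[Y]$, and rearranging gives $\dE[Y \mathbf{1}_{Y \geq c\,\dE[Y]}] \geq (1-c)\,\dE[Y]$.

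Next, I would apply Cauchy--Schwarz to the truncated moment:
\begin{equation*}
\dE[Y \mathbf{1}_{Y \geq c\,\dE[Y]}]^2 \leq \dE[Y^2]\cdot \dP(Y \geq c\,\dE[Y]).
\end{equation*}
Combining this with the previous lower bound immediately yields the claimed inequality. The concluding remark about the asymptotic regime follows by choosing a sequence $c = c_n \to 0$ slowly enough that $(1-c_n)^2\,\dE[Y]^2/\dE[Y^2] \to 1$; then $\dP(Y \geq c_n\,\dE[Y]) \to 1$, which is exactly the statement that $Y \geq o(\dE[Y])$ with high probability.

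There is essentially no obstacle here: the argument is three lines. The only mildly delicate point is the implicit nonnegativity assumption used when bounding the truncated term by $c\,\dE[Y]$, which is automatic in the context where this lemma will be invoked.
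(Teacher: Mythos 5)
Your argument is the paper's argument: decompose $\dE[Y]$ according to the threshold $c\,\dE[Y]$, bound the low part pointwise, and apply Cauchy--Schwarz to the high part. One small correction: the nonnegativity caveat you flag is unnecessary, and the lemma as stated (for a general real-valued $Y$ with $\dE[Y]>0$) does not assume it. The pointwise bound $Y\,\ind{Y<c\,\dE[Y]}\le c\,\dE[Y]\,\ind{Y<c\,\dE[Y]}\le c\,\dE[Y]$ needs only $c\,\dE[Y]\ge 0$, which already follows from $c\ge 0$ and $\dE[Y]>0$; nothing in the proof uses $Y\ge 0$.
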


Define
\begin{equation}\label{eq_def_X}
X := \sum_{\tau \in \cT_n} \mathbf{1}_{\delta(\tau)<0}.
\end{equation} Using a standard coupling argument in \eqref{eq_def_X}, one can see that $X$ is decreasing with $\rho$, thus we can assume without loss of generality that
\begin{equation}
\label{eq:sec_cond_conv}
\rho^2 = \frac{4 \log n - \log \log n - a_n}{n},
\end{equation} with a sequence $(a_n)_n$ such that $a_n = \omega(1)$ and $a_n = o(\log \log n)$. We compute the first moment of $X$, in view of the conditional distribution of $\delta(\tau)$ given in \eqref{eq:delta_gaussien_general}:
\begin{flalign*}
\dE\left[X\right] & \geq  \dE\left[X \mathbf{1}_\cC \right] = \frac{n(n-1)}{2} \dE\left[\dP_A \left(\cN(0,1) \geq \frac{\rho \sqrt{v_\tau}}{2 \sqrt{1-\rho^2}} \cap  \cC \right) \right]\\
& \geq \frac{n(n-1)}{2} \dE\left[(1-o(1)) \dP_A \left(\cN(0,1) \geq \frac{1}{2} \sqrt{4 \log n - \log \log n - a_n} \sqrt{4 - C n^{-1/2} \log^{1/2}n}  \right)  \right]\\
& = \frac{n(n-1)}{2} \dE\left[(1-o(1)) \dP_A \left(\cN(0,1) \geq  \sqrt{4 \log n - \log \log n - a_n} - o(1) \right)  \right]\\
& \sim \frac{n^2}{4 \sqrt{2 \pi} \sqrt{ \log n }}\exp \left(- 2\log n + \frac{\log \log n }{2} + \frac{a_n }{2}\right)\\
& = \frac{1}{4 \sqrt{2 \pi}} \exp \left( \frac{a_n}{2}  \right) \to \infty.
\end{flalign*} Note that \eqref{eq:sec_cond_conv} is thus precisely the condition ensuring that $\dE\left[X \mathbf{1}_\cC \right] \to \infty$. The second moment argument computation being a little more technical, we encapsulate it into the following Lemma:
\begin{lem}[Second moment computation of $X \mathbf{1}_\cC$]\label{lemma_second_moment}
Let $Y:= X \mathbf{1}_\cC$. Under assumption \eqref{eq:sec_cond_conv},
\begin{equation*}
\dE\left[Y^2\right]
\leq (1+o(1)) \dE\left[Y\right]^2.
\end{equation*}
\end{lem}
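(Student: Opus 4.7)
The plan is to expand $\dE[Y^2] = \sum_{\tau,\tau'\in\cT_n} \dE[\mathbf{1}_{\delta(\tau)<0}\mathbf{1}_{\delta(\tau')<0}\mathbf{1}_\cC]$ and work conditionally on $A$. By \eqref{eq:delta_gaussien_general}, $\delta(\tau)<0 \Leftrightarrow X_\tau > \rho v_\tau/(2\sqrt{1-\rho^2})$ with $X_\tau \sim \cN(0, v_\tau)$ under $\dP_A$; setting $\tilde X_\tau := X_\tau/\sqrt{v_\tau}$ and $u_\tau := \rho\sqrt{v_\tau}/(2\sqrt{1-\rho^2})$, this becomes $\tilde X_\tau > u_\tau$, with $\tilde X_\tau$ standard Gaussian. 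On $\cC$ one has $v_\tau = 4n + O(\sqrt{n\log n})$, so $u_\tau^2 = 4\log n - \log\log n - a_n + o(1)$ uniformly in $\tau$. The joint law of $(\tilde X_\tau, \tilde X_{\tau'})$ under $\dP_A$ is bivariate centered Gaussian with unit variances and correlation $r_{\tau,\tau'} := c_{\tau,\tau'}/\sqrt{v_\tau v_{\tau'}}$. I then split the sum by the relationship between $\tau=(a,b)$ and $\tau'=(a',b')$: (i) $\tau=\tau'$; (ii) $\{a,b\}\cap\{a',b'\}=\emptyset$ (\emph{disjoint pairs}); (iii) $|\{a,b\}\cap\{a',b'\}|=1$ (\emph{shared pairs}).

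Case (i) contributes exactly $\dE[Y] = \Theta(\exp(a_n/2))$ by the first-moment computation preceding the lemma, which is $o(\dE[Y]^2)$ since $\dE[Y]\to\infty$. For case (ii), a direct count gives $\sharp(\cD^E_\tau\cap\cD^E_{\tau'})=4$ and $\sharp(\cD^E_\tau\cap\cD^E_{\tau'}\cap\cF^E_{\tau\tau'})=0$, so on $\cC$, $r_{\tau,\tau'}=O(1/n)$. Invoking the standard bivariate Gaussian tail asymptotic
\begin{equation*}
\dP(\cN_1>u,\,\cN_2>u) \sim \frac{(1+r)^{3/2}}{2\pi u^2\sqrt{1-r}}\exp\!\left(-\frac{u^2}{1+r}\right), \qquad u\to\infty,
\end{equation*}
uniform in $r$ on any compact subinterval of $(-1,1)$, the inflation over $\dP(\cN>u)^2$ at $u=u_\tau$ is $(1+o(1))\exp(ru_\tau^2/(1+r))=1+O(\log n/n)$. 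Since there are $\binom{n}{2}\binom{n-2}{2}=\binom{n}{2}^2(1-O(1/n))$ ordered disjoint pairs, the case (ii) contribution is at most $(1+o(1))\dE[Y]^2$.

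Case (iii) is the main obstacle. For $\tau=(a,b)$, $\tau'=(a,c)$ with $b\neq c$, a direct count shows $\cD^E_\tau\cap\cD^E_{\tau'}$ consists of the $n-3$ edges $\{a,j\}$ with $j\notin\{a,b,c\}$ together with $\{b,c\}$, so $\sharp(\cD^E_\tau\cap\cD^E_{\tau'})=n-2$; moreover $\tau\tau'$ is the $3$-cycle $(a,c,b)$, which fixes none of these edges, giving $\sharp(\cD^E_\tau\cap\cD^E_{\tau'}\cap\cF^E_{\tau\tau'})=0$. Hence on $\cC$, $r_{\tau,\tau'}=(n-2)/(4n)+O(\sqrt{\log n/n})=1/4+o(1)$, yielding an inflation factor $\exp(ru_\tau^2/(1+r))=n^{4/5+o(1)}$ (since $r/(1+r)\to 1/5$ and $u_\tau^2\sim 4\log n$). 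Combined with the count $\binom{n}{2}\cdot 2(n-2)=O(n^3)$ of shared pairs, the case (iii) contribution is $O(n^{3+4/5+o(1)})\dP(\cN>u_\tau)^2=O(n^{-1/5+o(1)})\dE[Y]^2=o(\dE[Y]^2)$.

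Summing the three cases gives $\dE[Y^2]\leq(1+o(1))\dE[Y]^2$ as claimed. The key numerical balance underlying case (iii) is that $r\to 1/4$ gives $r/(1+r)=1/5$, so the tail-inflation exponent $4/5$ is strictly smaller than the extra factor $n^1$ in the count of disjoint versus shared pairs (i.e.\ $r<1/3$ would suffice). The main technical care needed is to ensure that the bivariate tail asymptotic is applied uniformly in $(u_\tau, r_{\tau,\tau'})$ over the relevant range, and to keep track of the error terms coming from the event $\cC$ when replacing $v_\tau$ and $c_{\tau,\tau'}$ by their typical values.
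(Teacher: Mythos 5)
Your decomposition (diagonal, disjoint, and shared transpositions) and the correlation bookkeeping (counts $4$ and $n-2$, giving $r\to 0$ and $r\to 1/4$ respectively, with the composition $\tau\tau'$ fixing none of the relevant edges) exactly mirror the paper's argument; the only genuine difference is cosmetic, namely that you invoke a standard bivariate-Gaussian tail asymptotic where the paper proves its own Lemma \ref{lemma:control_dev_cor_gaussian}. One small inaccuracy worth fixing: in case (ii) the event $\cC$ only pins $c_{\tau,\tau'}$ to within $C\sqrt{n\log n}$ of its mean $4$, so on $\cC$ one gets $r_{\tau,\tau'}=O(\sqrt{\log n/n})$, not $O(1/n)$; the inflation is therefore $1+O(\log^{3/2}n/\sqrt{n})$ rather than $1+O(\log n/n)$, which is still $1+o(1)$ and leaves the conclusion unchanged.
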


\begin{proof}[Proof of Lemma \ref{lemma_second_moment}]

We represent a transposition $\tau$ by its only $2-$cycle $(i \; j)$ with $i<j$. We then distinguish two cases in couples $\tau = (i \; j) \neq \tau' = (k \; \ell) \in \cT_n$:
\begin{itemize}
\item We write $\tau \cap \tau' = \varnothing$ when $\tau$ and $\tau'$ have no common point in their $2-$cycle: $i \neq k$ and $j \neq l$. When $\tau \in \cT_n$ is fixed, note that
\begin{equation*}\label{count_1}
\sharp \left\{ \tau' \in \cT_n, \,  \tau \cap \tau' = \varnothing\right\} = \frac{(n-2)(n-3)}{2}.
\end{equation*}

\item We write $\tau \cap \tau' \neq \varnothing$ when $\tau$ and $\tau'$ are different but share one common point: for instance $\tau = (3 \; 5) $ and $\tau = (5 \; 11)$ verify $\tau \cap \tau' \neq \varnothing$. When $\tau \in \cT_n$ is fixed, note that
\begin{equation*}\label{count_2}
\sharp \left\{ \tau' \in \cT_n, \,  \tau \cap \tau' \neq \varnothing\right\} = 2 (n-2).
\end{equation*}
\end{itemize} 

Note that
\begin{flalign*}
\dE\left[Y^2\right] = \dE\left[Y\right] + \sum_{\tau \in \cT_n} \sum_{\tau', \tau \cap \tau' = \varnothing} \dP(\delta(\tau)<0, \delta(\tau')<0, \cC) + \sum_{\tau \in \cT_n} \sum_{\tau', \tau \cap \tau' \neq \varnothing} \dP(\delta(\tau)<0, \delta(\tau')<0, \cC).
	\end{flalign*} We now evaluate these two sums. For this, we will need the following Lemma, which proof is deferred to Appendix \ref{appendix:corr_gaussians}.

\begin{lem}[Control of deviation probabilities for correlated Gaussians]\label{lemma:control_dev_cor_gaussian}
Let $Z_1, Z_2$ be two Gaussian variables with mean $0$, variance $1$ and correlation $\alpha_n \in [0,1]$. For any $t_n$ such that $t_n \to \infty$, 
\begin{itemize}
	\item[$(i)$] If $\alpha_n t_n \to 0$, then for $n$ large enough
	\begin{equation}\label{eq:control_dev_gaussian_1}
	\dP \left(Z_1 > t_n, Z_2 > t_n \right) \leq e^{-2t_n^2} + (1+o(1))\dP \left(Z_1 > t_n\right) \dP \left(Z_2 > t_n\right).
	\end{equation} 
	\item[$(ii)$] More generally,
	\begin{equation}\label{eq:control_dev_gaussian_2}
	\dP \left(Z_1 > t_n, Z_2 > t_n \right) \leq (1+o(1))\frac{1+\alpha_n}{\sqrt{2\pi} \, t_n}\exp\left(- \frac{t_n^2}{1+ \alpha_n} \right).
	\end{equation} 
\end{itemize}
\end{lem}

\paragraph*{First case: $\tau \cap \tau' = \varnothing$.} Without loss of generality we can assume that $\tau = (1 \; 2)$ and $\tau' = (3 \; 4)$. The following diagram shows the simple action of $\tau$ and $\tau'$ on an interesting (overlapping) subset of edges.
\begin{center}
	\begin{tabular}{ c c c }
		$\left\lbrace 1,3\right\rbrace $ & $\overset{\tau}{\longleftrightarrow} $& $\left\lbrace 2,3\right\rbrace $\\ 
		{\scriptsize $\tau'$} $\updownarrow$&  & $\updownarrow$ {\scriptsize $\tau'$}  \\  
		$\left\lbrace 1,4\right\rbrace $ & $\overset{\tau}{\longleftrightarrow}$ & $\left\lbrace 2,4\right\rbrace $   
	\end{tabular}
\end{center} 

We then see that $\sharp (\cD^{\mathrm{E}}_\tau \cap \cD^{\mathrm{E}}_{\tau'}) + \sharp (\cD^{\mathrm{E}}_\tau \cap \cD^{\mathrm{E}}_{\tau'} \cap \cF^{\mathrm{E}}_{\tau \circ \tau'}) = 4 + 0 = 4$. So, denoting $\alpha_{\tau,\tau'} := \frac{c_{\tau,\tau'}}{\sqrt{v_\tau v_{\tau'}}}$, on $\cC$,
\begin{equation*}\label{eq:case_1_alpha}
\left|\alpha_{\tau,\tau'}\right| \leq \frac{C\sqrt{n \log n}+4}{4n-C\sqrt{n \log n}} = O \left(\sqrt{\frac{\log n}{n}}\right).
\end{equation*}
In view of the conditional distribution of $\delta(\tau)$ given in \eqref{eq:delta_gaussien_general}:

\begin{equation}\label{eq:sum_case1}
\sum_{\tau \in \cT_n} \sum_{\tau', \tau \cap \tau' = \varnothing} \dP(\delta(\tau)<0, \delta(\tau')<0, \cC) = (1-o(1)) \sum_{\tau \in \cT_n} \sum_{\tau', \tau \cap \tau' = \varnothing} \dP \left( Z_\tau > t_n, \, Z_{\tau'} >t_n\right),
\end{equation} with $t_n = \sqrt{4\log n - \log \log n - a_n}$, where $Z_{\tau}, Z_{\tau'}$ are two Gaussian variables of mean $0$, with correlation coefficient $\alpha_n$ of order $O(\log^{1/2}n^{-1/2})$. Since $\alpha_n t_n \to 1$, by lemma \ref{lemma:control_dev_cor_gaussian} case $(i)$, the sum in \eqref{eq:sum_case1} is upper bounded by
\begin{flalign*}
 & (1-o(1))\frac{n(n-1)}{2} \times \frac{(n-2)(n-3)}{2} \times \left[C e^{-2t_n^2} + (1-o(1))\dP \left(Z_1 > t_n\right) \dP \left(Z_2 > t_n\right)\right] \\
 & \leq (1+o(1))\dE\left[Y\right]^2.
\end{flalign*}

\paragraph*{Second case: $\tau \cap \tau' \neq \varnothing$.} Without loss of generality we can assume that $\tau = (1 \; 2)$ and $\tau' = (2 \; 3)$. We can immediately deduce that $\sharp (\cD^{\mathrm{E}}_\tau \cap \cD^{\mathrm{E}}_{\tau'}) + \sharp (\cD^{\mathrm{E}}_\tau \cap \cD^{\mathrm{E}}_{\tau'} \cap \cF^{\mathrm{E}}_{\tau \circ \tau'}) = (n-2) + 0 = n-2$. So, denoting $\alpha_{\tau,\tau'} := \frac{c_{\tau,\tau'}}{\sqrt{v_\tau v_{\tau'}}}$, on $\cC$,
\begin{equation*}\label{eq:case_2_alpha}
\left|\alpha_{\tau,\tau'}\right| \leq \frac{C\sqrt{n \log n}+n-2}{4n-C\sqrt{n \log n}} \sim \frac{1}{4}.
\end{equation*}
Again, in view of the conditional distribution of $\delta(\tau)$ given in \eqref{eq:delta_gaussien_general}:

\begin{equation}\label{eq:sum_case2}
\sum_{\tau \in \cT_n} \sum_{\tau', \tau \cap \tau' \neq \varnothing} \dP(\delta(\tau)<0, \delta(\tau')<0, \cC) = (1-o(1)) \sum_{\tau \in \cT_n} \sum_{\tau', \tau \cap \tau' \neq \varnothing} \dP \left( Z_\tau > t_n, \, Z_{\tau'} >t_n\right),
\end{equation} with $t_n = \sqrt{4\log n - \log \log n - a_n}$, where $Z_{\tau}, Z_{\tau'}$ are two Gaussian variables of mean $0$, with correlation coefficient $\alpha_n\sim 1/4$. By Lemma \ref{lemma:control_dev_cor_gaussian} case $(ii)$, the sum in \eqref{eq:sum_case2} is upper bounded by
\begin{flalign*}
(1-o(1))\frac{n(n-1)}{2} \times 2(n-2) \times \left[ (1+o(1))\frac{1+\alpha_n}{\sqrt{2\pi} \, t_n}\exp\left(- \frac{t_n^2}{1+ \alpha_n} \right)\right]\\
\leq C'' n^3 \log^{-1/2}(n)  \exp\left(- \frac{16}{5} \log n + o(\log n) \right)=o(1)=o(\dE\left[Y\right]^2).
\end{flalign*}
\end{proof}	
	
Lemma \ref{lemma_second_moment} together with Payley-Zigmund inequality (Lemma \ref{lemma_ineq_PZ}) implies that $Y \geq o\left(\mathbb{E}[Y]\right)$ with high probability and thus proves \eqref{eq_converse_transp} and the converse result of Theorem \ref{theorem_exact_r-conv}.

\begin{rem}
We have shown here that under condition \eqref{eq:cond_imp_glo}, there is with high probability a great number of negative relative energy points near the ground truth, none of them being of significant interest to recover \emph{exactly} our permutation. 
We may also study this relative energy far from the planted permutation, which would be interesting to address the problem of almost exact (resp. partial) alignment, which consists in finding an estimator $\hat{\pi}$ that coincides with $\pi$ on at least $n - o(n)$ (resp. some positive fraction of $n$) points. In the light of our result which shows that exact recovery is not more difficult than detection, we can also conjecture that the same threshold $n \rho^2 / \log n = 4$ is sharp for the tasks of almost exact and partial recovery.
\end{rem}

\newpage
\section*{Acknowledgments}
The author would like to thank Léo Miolane, Marc Lelarge, and Laurent Massoulié for helpful discussions.
This work was supported by the French government under management of Agence Nationale de la Recherche as part of the “Investissements d’avenir” program, reference ANR19-P3IA-0001 (PRAIRIE 3IA Institute).

\bibliographystyle{alpha}
\bibliography{biblio}

\newpage
\appendix

\section{Additional proofs}\label{appendix}

\subsection{Proof of Lemma \ref{lemma:min_corr_delta}: lower bound on correlations of relative energies}\label{appendix:lower_bound_corr}
\begin{proof}
	Recall that we work under event $\cA$. Fix $\alpha \in (0,1]$ and take $d=\alpha n$ and $\sigma,\sigma' \in \cS_{n,d}$. The proof is obtained by establishing a fine lower bound on $\sharp(\cD^{\mathrm{E}}_\sigma \cap \cD^{\mathrm{E}}_{\sigma'})$, which is simply the number of edges that are deranged both by $\sigma^E$ and $\sigma'^E$. In order to establish this lower bound, let us assume that $\sigma$ and $\sigma'$ have $\sharp(\cD_\sigma \cap \cD_{\sigma'}) = \beta n$ common unfixed points, with $\beta \in [0,\alpha]$. We then form edges in $\cD^{\mathrm{E}}_\sigma \cap \cD^{\mathrm{E}}_{\sigma'}$ in the following way:
	\begin{itemize}
		\item First, by taking all pairs but the pairs made of points in the complement of $\cD_\sigma \cap \cD_{\sigma'}$ and those made of pairs $(i,j)$ that are transpositions of $\sigma$ or $\sigma'$, we obtain at least $\frac{1}{2}\beta(2-\beta) n^2 - \alpha n$ edges.
		\item Then, add new edges made of one extremity in $\cD_{\sigma} \setminus \cD_{\sigma'}$ and one in $\cD_{\sigma'} \setminus \cD_{\sigma}$. Since $\cD_{\sigma}$ (resp $\cD_{\sigma}$) is stable by $\sigma$ (resp. by $\sigma'$), all these $(\alpha-\beta)^2 n^2$ edges are in $\cD^{\mathrm{E}}_\sigma \cap \cD^{\mathrm{E}}_{\sigma'}$.
	\end{itemize} Finally we formed $g(\alpha,\beta)n^2 - \alpha n$ edges, with 
	\begin{equation}
	g(\alpha,\beta) := \frac{1}{2} \beta^2 + (1-2\alpha) \beta + \alpha^2,
	\end{equation} which is minimal on $[0,\alpha]$ at $\beta = 2\alpha-1$ if $\alpha \geq 1/2$, or at $\beta=0$ if $\alpha<1/2$. In any case, this minimum is $f(\alpha)$. The first inequality is established by applying inequality \eqref{eq:event_A} of event $\cA$.\\
	
	For the second part, consider a centered vector $Z= (Z_{\sigma})_{\sigma \in \cS_{n,\alpha n}}$ such that all $Z_{\sigma}$ have same variance $v_\alpha$ and $\cov(Z_\sigma,Z_{\sigma'}) = c_\alpha $ for $\sigma \neq \sigma'$, with $v_\alpha, c_\alpha $ defined as follows:
	\begin{flalign*}
	v_\alpha &:= \alpha (2-\alpha)n^2 - C_1 n^{3/2}\log^{1/2}n,\\
	c_\alpha &:=  f(\alpha)n^2 - C_1 n^{3/2}\log^{1/2}n.
	\end{flalign*}for some $C_1>0$ large enough. Note that on event $\cA$, for all $\alpha \in (0,1]$, all $\sigma,\sigma' \in \cS_{n,\alpha n}$, $$\cov(Z_\sigma,Z_{\sigma'}) \leq \cov(X_\sigma,X_{\sigma'}),$$ so one has that for all $t>0$,
	\begin{equation}\label{eq:comparison_max_cov}
	\dP\left(\max_{\sigma \in \cS_{n,\alpha n}} X_\sigma > t \; \cap \cA\right) \leq \dP\left(\max_{\sigma \in \cS_{n,\alpha n}} Z_\sigma > t \right).
	\end{equation} We now control the right-hand side of \eqref{eq:comparison_max_cov} with this classical Lemma, which proof is find hereafter in Appendix \ref{appendix:max_corr_Gaussians}:
	\begin{lem}[Maximum of totally correlated Gaussian variables]\label{lemma:max_TC_gaussian}
		Let $Z$ be a centered Gaussian vector of size $N$, such that all $Z_i$ have same variance $v$ and $\cov(Z_i,Z_j) = c $ for $i \neq j$. Then
		\begin{equation}\label{eq:max_TC_gaussian}
		\dP\left(\max_{1 \leq i \leq N} Z_i > \sqrt{2(v-c) \log N} + 2\sqrt{v\log \log N} \right) \leq \frac{2}{\log N}.
		\end{equation}
	\end{lem}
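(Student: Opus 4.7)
The strategy rests on the observation that the covariance matrix $\Sigma = (v-c)I_N + cJ_N$ (where $J_N$ is the $N\times N$ all-ones matrix) is highly degenerate: positive semidefiniteness forces $c \leq v$, and when $c \geq 0$ (the case of interest here, the application giving $c = f(\alpha)n^2 - \ldots \geq 0$ up to lower order terms) we may realise
$$Z_i = \sqrt{c}\,\xi + \sqrt{v-c}\,\eta_i, \qquad 1 \leq i \leq N,$$
with $\xi,\eta_1,\ldots,\eta_N$ i.i.d.\ standard Gaussians. The key point is that the common component $\sqrt{c}\,\xi$ factors out of the maximum,
$$\max_{1\leq i \leq N} Z_i = \sqrt{c}\,\xi + \sqrt{v-c}\,\max_{1\leq i \leq N}\eta_i,$$
reducing the problem to controlling the tails of a single Gaussian and of a maximum of $N$ \emph{independent} Gaussians separately.

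Setting $T := \sqrt{2(v-c)\log N} + 2\sqrt{v\log\log N}$ and splitting via the union bound at a threshold $\gamma \in [0,T]$ to be optimised, I would write
$$\dP(\max_i Z_i > T) \leq \dP\!\left(\sqrt{c}\,\xi > \gamma\right) + \dP\!\left(\sqrt{v-c}\,\max_i \eta_i > T - \gamma\right).$$
The naive allocation $\gamma = 2\sqrt{v\log\log N}$ controls the first summand by $(\log N)^{-2}$ via the standard Gaussian tail, but leaves the second as $\dP(\max_i \eta_i > \sqrt{2\log N})$, which the Mills ratio together with a union bound over $N$ terms bounds only by $(4\pi\log N)^{-1/2}$, too weak to reach $2/\log N$. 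My fix is to transfer a small slack $\epsilon_N \asymp \log\log N/\sqrt{\log N}$ from the first term into the threshold of the second, i.e.\ take $\gamma = 2\sqrt{v\log\log N} - \epsilon_N$. The first probability is then still $(\log N)^{-2 + o(1)}$, while the second becomes $\dP(\max_i \eta_i > \sqrt{2\log N} + \log\log N/\sqrt{\log N})$; expanding the square inside the Mills-ratio bound $N\,\phi(s_N)/s_N$ yields $s_N^2/2 = \log N + \sqrt{2}\log\log N + o(1)$, hence a bound of order $(\log N)^{-\sqrt{2}-1/2} = o(1/\log N)$ since $\sqrt{2}+\tfrac12>1$. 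Summation gives the announced $2/\log N$ for $N$ large.

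The main obstacle is precisely this second-order tuning: the crude split falls short by a factor of order $\sqrt{\log N}$, and one is forced to use the sharper $\phi(s)/s$ form of the Gaussian tail (rather than the crude $e^{-s^2/2}$) together with a reallocation $\epsilon_N$ of logarithmic order to squeeze out the missing $\log N$ factor. The residual case $c < 0$, if needed, is straightforward: a direct union bound $\dP(\max_i Z_i > T) \leq N e^{-T^2/(2v)}$ already delivers $(\log N)^{-2}$, since $(v-c)/v \geq 1$ absorbs the $\log N$ coming from the union bound. An alternative and slicker route would be to invoke Borell--TIS around $\dE[\max_i Z_i] = \sqrt{v-c}\,\dE[\max_i \eta_i] \leq \sqrt{2(v-c)\log N}$ with deviation $t = 2\sqrt{v\log\log N}$, which gives directly $\exp(-t^2/(2v)) = (\log N)^{-2}$; but the decomposition-plus-Mills-ratio route is more elementary and stays in the spirit of the surrounding section.
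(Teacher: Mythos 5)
Your decomposition $Z_i = \sqrt{c}\,\xi + \sqrt{v-c}\,\eta_i$ (for $c\geq 0$) is exactly the one the paper uses, and your argument is correct, but after that point you and the paper diverge in how the threshold $T = \sqrt{2(v-c)\log N} + 2\sqrt{v\log\log N}$ is split between the two components. You start from the ``obvious'' allocation $\gamma = 2\sqrt{v\log\log N}$ to the $\xi$-term, observe (correctly) that the leftover $\sqrt{2(v-c)\log N}$ handed to $\sqrt{v-c}\max_i\eta_i$ is just barely insufficient under the crude $e^{-t^2/2}$ bound, and then rescue the argument by shaving $\epsilon_N = \sqrt{v-c}\,\log\log N/\sqrt{\log N}$ off $\gamma$ and invoking the Mills-ratio form of the tail. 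This works, but it is more laborious than necessary: the obstacle you identify is an artifact of your initial choice of $\gamma$, not an intrinsic difficulty. The paper instead allocates only $\sqrt{2c\log\log N}$ to the $\xi$-term (never $2\sqrt{v\log\log N}$), so that the $\max_i\eta_i$ term can be given the threshold $\sqrt{2(v-c)\log N + 2(v-c)\log\log N}$; then $N e^{-(\log N + \log\log N)} = 1/\log N$ falls out immediately from the crude bound, with no Mills ratio and no $\epsilon_N$ tuning. That the two pieces sum to at most $T$ follows from $\sqrt{2c\log\log N} + \sqrt{2(v-c)\log\log N} = \sqrt{2\log\log N}\,(\sqrt{c}+\sqrt{v-c}) \leq 2\sqrt{v\log\log N}$, i.e.\ Cauchy--Schwarz in the form $\sqrt{c}+\sqrt{v-c}\leq\sqrt{2v}$. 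In short: same decomposition, cleaner split, crude tail bound suffices throughout. Two smaller remarks: (1) you are right that the case $c<0$ is not covered by the factorisation and must be handled by a direct union bound, a point the paper glosses over (harmlessly, since the application has $c>0$); (2) your Borell--TIS alternative with $\dE[\max_i Z_i]\leq\sqrt{2(v-c)\log N}$ and deviation $t = 2\sqrt{v\log\log N}$ is indeed the slickest route and actually gives the stronger bound $(\log N)^{-2}$.
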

	Note that for $v_\alpha, c_\alpha$ previously defined, one has
	\begin{equation}\label{eq:ineq_Z_1}
	\sqrt{2(v_\alpha-c_\alpha)\log \sharp \cS_{n,\alpha n}} \leq \sqrt{2 \alpha (\alpha(2-\alpha)-f(\alpha))} n^{3/2}\log^{1/2} n,
	\end{equation} and for $n$ large enough,
	\begin{equation}\label{eq:ineq_Z_2}
	2\sqrt{v_\alpha \log\log \sharp \cS_{n,\alpha n}} \leq 2 \sqrt{ \alpha(2-\alpha)} n \sqrt{\log n + \log \log n} \leq (2+\eps') n \log^{1/2} n.
	\end{equation} Finally, we use equations \eqref{eq:comparison_max_cov}--\eqref{eq:ineq_Z_2} to conclude that for $n$ large enough:
	\begin{flalign*}
	&\dP\left(\exists d = \alpha n , \alpha>\alpha_0, \; \max_{\sigma \in \cS_{n,d}} X_\sigma > \sqrt{2 \alpha \left(\alpha(2-\alpha)-f(\alpha)\right)} n^{3/2}\log^{1/2} n + (2+\eps') n \log^{1/2} n\right)\\
	&\leq 1-\dP(\cA) + \sum_{d=\alpha n, \, \alpha>\alpha_0} \dP\left(\max_{\sigma \in \cS_{n,\alpha n}} Z_i > \sqrt{2(v_\alpha-c_\alpha)\log \sharp \cS_{n,\alpha n}} + 2\sqrt{v_\alpha \log\log \sharp \cS_{n,\alpha n}} \right)\\
	&\leq o(1) + \sum_{d=\alpha n, \, \alpha>\alpha_0} \frac{2}{\log \sharp \cS_{n,\alpha n}} \leq o(1) + \frac{2n}{\log \sharp \cS_{n,\alpha_0 n}} =  o(1) + \frac{2}{\alpha_0 \log n} = o(1),
	\end{flalign*} and Lemma \ref{lemma:min_corr_delta} is proved.
\end{proof}

\subsection{Proof of Lemma \ref{lemma:max_TC_gaussian}: maximum of totally correlated Gaussian variables}\label{appendix:max_corr_Gaussians}

\begin{proof}
	Let us make a change of variables which preserves the joint distribution:
	\begin{equation*}
	\left(Z_1, Z_2, \ldots, Z_N\right) = \left(\sqrt{c} \,\xi_0+\sqrt{v-c}\, \xi_1, \ldots,\sqrt{c}\, \xi_0+\sqrt{v-c}\, \xi_N\right),
	\end{equation*} where $\xi_0, \ldots, \xi_N$ are independent standard Gaussian random variables. The maximum thus writes
	\begin{equation*}
	\max_{1 \leq i \leq N} Z_i = \sqrt{c} \, \xi_0 + \sqrt{v-c} \max_{1 \leq i \leq N} \xi_i
	\end{equation*}
	Then, with the classical inequality $\dP \left(\cN(0,1) \geq t\right) \leq e^{-t^2/2}$, then with probability at least $1-1/(\log N)$, one has:
	\begin{equation*}\label{eq:control_gaussian}
	\sqrt{c} \, \xi_0 \leq \sqrt{2c \log \log N}, \quad \mbox{and} \quad \sqrt{v-c} \max_{1 \leq i \leq N} \xi_i \leq \sqrt{2(v-c) \log N \left(1+\frac{\log \log N}{\log N}\right)},
	\end{equation*}so with probability at least $1-2/(\log N)$:
	\begin{flalign*}
	\max_{1 \leq i \leq N} Z_i & \leq \sqrt{2(v-c) \log N} + \sqrt{2\log \log N} \left(\sqrt{c}+ \sqrt{v-c}\right) \\
	& \leq  \sqrt{2(v-c) \log N} + 2 \sqrt{v \log \log N},
	\end{flalign*} where we used $\sqrt{c}+\sqrt{v-c} \leq \sqrt{2v}$ in the last step.
\end{proof}

\subsection{Proof of Lemma \ref{lemma:final_function_study}}\label{appendix:final_function}
\begin{proof}
	For $\alpha \in (0,1]$,
	\begin{flalign*}
	\eqref{eq:final_function_study} & \iff \alpha^2(2-\alpha)^2 \geq 2 \alpha \left(\alpha(2-\alpha)-f(\alpha)\right)\\
	& \iff f(\alpha) \geq \alpha^2-\alpha^3/2. 
	\end{flalign*} The inequality is verified for $\alpha < 1/2$. To conclude the proof of \eqref{eq:final_function_study}, it remains to check that for $1\geq \alpha \geq 1/2$, $f(\alpha) \geq \alpha^2-\alpha^3/2$, which is equivalent to 
	\begin{flalign*}
	\alpha^2 - \frac{1}{2}(2\alpha -1)^2 \geq \alpha^2-\alpha^3/2 & \iff \alpha^3 - 4\alpha^2 + 4\alpha -1 \geq 0\\
	& \iff (\alpha-1)(\alpha^2-3\alpha+1) \geq 0\\
	& \iff  \alpha^2-3\alpha+1 \leq 0 \iff \alpha \geq \frac{3 - \sqrt{5}}{2} \sim 0.382...
	\end{flalign*} 
\end{proof}

\subsection{Proof of Lemma \ref{lemma_ineq_PZ}: Payley-Zygmund inequality}\label{appendix:PZ_ineq}
\begin{proof}
	Using Cauchy-Schwarz inequality,
	\begin{flalign*}
	\dE[Y] & = \dE\left[Y\ind{Y < c \, {\dE\left[Y\right]}} \right] + \dE\left[Y\ind{Y \geq c \, {\dE\left[Y\right]}} \right]\\
	& \leq c \, {\dE\left[Y\right]} + \dE[Y^2]^{1/2} \, \dP(Y \geq c \, \dE\left[Y\right])^{1/2},
	\end{flalign*} which gives $\dE[Y]^2 \left(1 - c \right)^2 \leq \dE[Y^2] \, \dP(Y \geq c \, {\dE\left[Y\right]})$.
\end{proof} 

\subsection{Proof of Lemma \ref{lemma:control_dev_cor_gaussian}: Control of deviation probabilities for correlated Gaussians}\label{appendix:corr_gaussians}

\begin{proof} Let us first make a change of variable which preserves the joint distribution:
	\begin{flalign*}
	(Z_1,Z_2) = (Z , \alpha_n Z + \sqrt{1-\alpha_n^2} Z'),
	\end{flalign*} with $Z,Z'$ two independent standard Gaussian variables. \\
	
	\underline{\textbf{Proof of $(i)$:}} Note that standard Gaussian concentration gives $\dP\left(Z>2t_n \big| Z>t_n \right) \sim \frac{1}{2}e^{-3t_n^2/2}$. Thus, for $n$ large enough
	\begin{flalign*}
	\dP \left(Z_1 > t_n, Z_2 > t_n \right) & \leq \dP \left(Z > t_n \right) e^{-3t_n^2/2} + \dP \left(Z > t_n \right)\dP \left(\alpha_n Z + \sqrt{1-\alpha_n^2} Z' > t_n, Z\leq 2t_n \big| Z>t_n \right) \\
	& \leq e^{-2t_n^2} + \dP \left(Z > t_n \right)\dP \left(Z' > t_n -2\alpha_n t_n + O(t_n \alpha_n^2)\right)\\
	& \leq e^{-2t_n^2} + \dP \left(Z > t_n \right)\dP \left(Z' > t_n -o(1)\right)\\
	& \leq e^{-2t_n^2} +  (1+o(1))\dP \left(Z > t_n\right) \dP \left(Z' > t_n\right)\\
	& = e^{-2t_n^2} +  (1+o(1))\dP \left(Z_1 > t_n\right) \dP \left(Z_2 > t_n\right).
	\end{flalign*}
	
	\underline{\textbf{Proof of $(ii)$:}} For any $(s_n)$ such that $s_n \leq  t_n$ for all $n$, one has
	\begin{flalign*}
	\dE\left[e^{s_n Z} \, \big| \, Z > t_n \right] & = \frac{1}{\sqrt{2 \pi}} \int_{t_n}^{+ \infty} e^{s_n z-z^2/2} dz \left({\frac{1}{\sqrt{2 \pi}} \int_{t_n}^{+ \infty} e^{-z^2/2} dz} \right)^{-1}\\
	& = {e^{s_n^2/2} \int_{t_n - s_n}^{+ \infty} e^{-z^2/2} dz} \left({\int_{t_n}^{+ \infty} e^{-z^2/2} dz}\right)\\
	& \sim \frac{t_n}{t_n - s_n} \exp \left(s_n^2/2 - (t_n-s_n)^2/2 + t_n^2/2\right) = \frac{t_n}{t_n - s_n} e^{s_n t_n}.
	\end{flalign*} Using independence of $Z,Z'$ and Chernoff bound, we get, taking $s_n$ such that $\alpha s_n = u t_n$ with $u<1$, for $n$ large enough,
	\begin{flalign*}
	\dP\left(\alpha Z + \sqrt{1-\alpha^2} Z' > t_n	\big | Z >t_n \right) & 
	\leq (1+o(1))\frac{t_n}{t_n-\alpha s_n} \exp\left( \alpha s_n t_n + \frac{1-\alpha^2}{2} s_n^2 - s_n t_n \right)\\
	& \leq (1+o(1))\frac{1}{1-u} \exp\left(\left(u   + \frac{u^2 (1-\alpha^2)}{2\alpha^2} - \frac{u}{\alpha}\right) t_n^2 \right)\\
	& \overset{(a)}{\leq} (1+o(1)) (1+\alpha) \exp\left(- \frac{1-\alpha}{1+\alpha} \cdot \frac{t_n^2}{2} \right)
	\end{flalign*} where we took $u = \frac{\alpha}{1+\alpha} <1$ in $(a)$.  The proof follows from this last inequality, together with the bound $\dP\left( Z >t_n \right) \leq \frac{1}{\sqrt{2\pi}t_n} \exp\left(-\frac{t_n^2}{2}\right).$
\end{proof}

\end{document}